\documentclass{article}

\usepackage{arxiv}

\usepackage[utf8]{inputenc} 
\usepackage[T1]{fontenc}    
\usepackage{hyperref}       
\usepackage{url}            
\usepackage{booktabs}       
\usepackage{amsfonts}       
\usepackage{nicefrac}       
\usepackage{microtype}      
\usepackage{lipsum}         
\usepackage{graphicx}
\usepackage{subcaption}

\usepackage{natbib}
\usepackage{doi}
\usepackage{amsmath}
\usepackage{amsthm}
\usepackage{cleveref}       
\usepackage{comment}
\usepackage{makecell}
\newtheorem{theorem}{Theorem}

\title{SaluNet: Enabling Total Plasticity in Normalization-Free Deep Networks}
\newif\ifuniqueAffiliation
\uniqueAffiliationtrue

\ifuniqueAffiliation 
\author{ 
    Mouad Zaied\thanks{https://enig.rnu.tn/useruploads/files/CV-Pr-Mourad-ZAIED.pdf} \\
    Department of electrical engineering\\
    National Engineering School of Gabes (ENIG)\\
    University of Gabes\\
    Avenue Omar Ibn El Khattab, Zrig Eddakhlania 6029, Gabes, Tunisia \\
    \texttt{mourad.zaied@univgb.tn}
}
\else
\usepackage{authblk}

\author[1]{%
    Mouad Zaied\thanks{\texttt{mourad.zaied@univgb.tn}}%
}
\affil[1]{%
    Department of electrical engineering\\
    National Engineering School of Gabes (ENIG)\\
    University of Gabes\\
    Avenue Omar Ibn El Khattab, Zrig Eddakhlania 6029, Gabes, Tunisia
}
\fi

\renewcommand{\shorttitle}{ }

\hypersetup{
pdftitle={Bounded Adaptive Nonlinear Dynamics without Normalization},
pdfsubject={SALU, GALU, SWALU function or connexion and neurons plasticities},
pdfauthor={Mourad Zaied},
pdfkeywords={SALU function, connexion and neurons plasticities, More},
}

\begin{document}
\maketitle

\begin{abstract}

Normalization layers such as Batch Normalization and Layer Normalization have long been
considered essential to stable training in deep networks. This work demonstrates that they
can be fully replaced by a single learnable activation mechanism. We identify a
\textbf{plasticity suppression effect} induced by standard normalization: learnable
activation parameters such as PReLU's slope $\alpha$ rapidly lose meaningful adaptability
when paired with normalization layers, which dominate the statistical dynamics of
intermediate representations. Motivated by this observation, we introduce \textbf{SALU
(Saturated Adaptive Linear Unit)},
\begin{equation}
\operatorname{SALU}(x; a,b) = \frac{a x}{\sqrt{1 + a b x^2}}, \quad a > 0,\; b > 0
\label{eq:SALU_def_abs}
\end{equation}
a bounded, learnable activation that provides intrinsic signal stabilization through
geometry-driven dynamics, without relying on batch statistics or external affine parameters.
Building on SALU, we propose \textbf{SaluNet}, a new paradigm for deep learning grounded
in the principle of \textbf{total plasticity}: rather than distributing learnability
between restrictive normalization constraints and partially adaptive activations, SaluNet
makes every component of signal propagation fully learnable---connections via network
weights, stabilization via SALU-based adaptive saturation, and gating dynamics via
\textbf{SWALU} and \textbf{GALU}. In this framework, SALU directly replaces normalization
layers such as BatchNorm and LayerNorm, while SWALU and GALU replace standard activation
functions such as ReLU, Swish, and GELU, covering both convolutional networks (CNNs) and
Vision Transformers (ViTs) without architectural compromise. With a ResNet-18 backbone,
SaluNet-C-18 achieves $\mathbf{97.35\%}$ on CIFAR-10 and $\mathbf{83.25\%}$ on CIFAR-100
without any normalization layers, and maintains $\mathbf{93.44\%}$ and $\mathbf{76.23\%}$
at batch size 1 on CIFAR-10 and CIFAR-100 respectively, a regime where normalized
architectures fail to converge. In the transformer setting, SaluNet-T improves over the
standard LayerNorm-GELU configuration from $90.92\%$ to $\mathbf{91.01\%}$ on CIFAR-10
and from $66.54\%$ to $\mathbf{68.10\%}$ on CIFAR-100, and SaluNet-C-50 reaches
$\mathbf{78.67\%}$ Top-1 accuracy on ImageNet-1K under the standard $224\times224$
setting, and up to $\mathbf{79.23\%}$ at $288\times288$ resolution. These results suggest
that normalization layers, far from being necessary, actually suppress total plasticity, a
property that biological neurons inherently possess, which enables deep networks to learn
effectively.
\end{abstract}

\keywords{normalization-free deep learning \and learnable activation functions \and total plasticity \and SALU \and SWALU \and GALU \and convolutional neural networks \and vision transformers \and signal stabilization \and batch size robustness}

\begin{figure}[t] 
    \centering
    \includegraphics[width=0.5\linewidth]{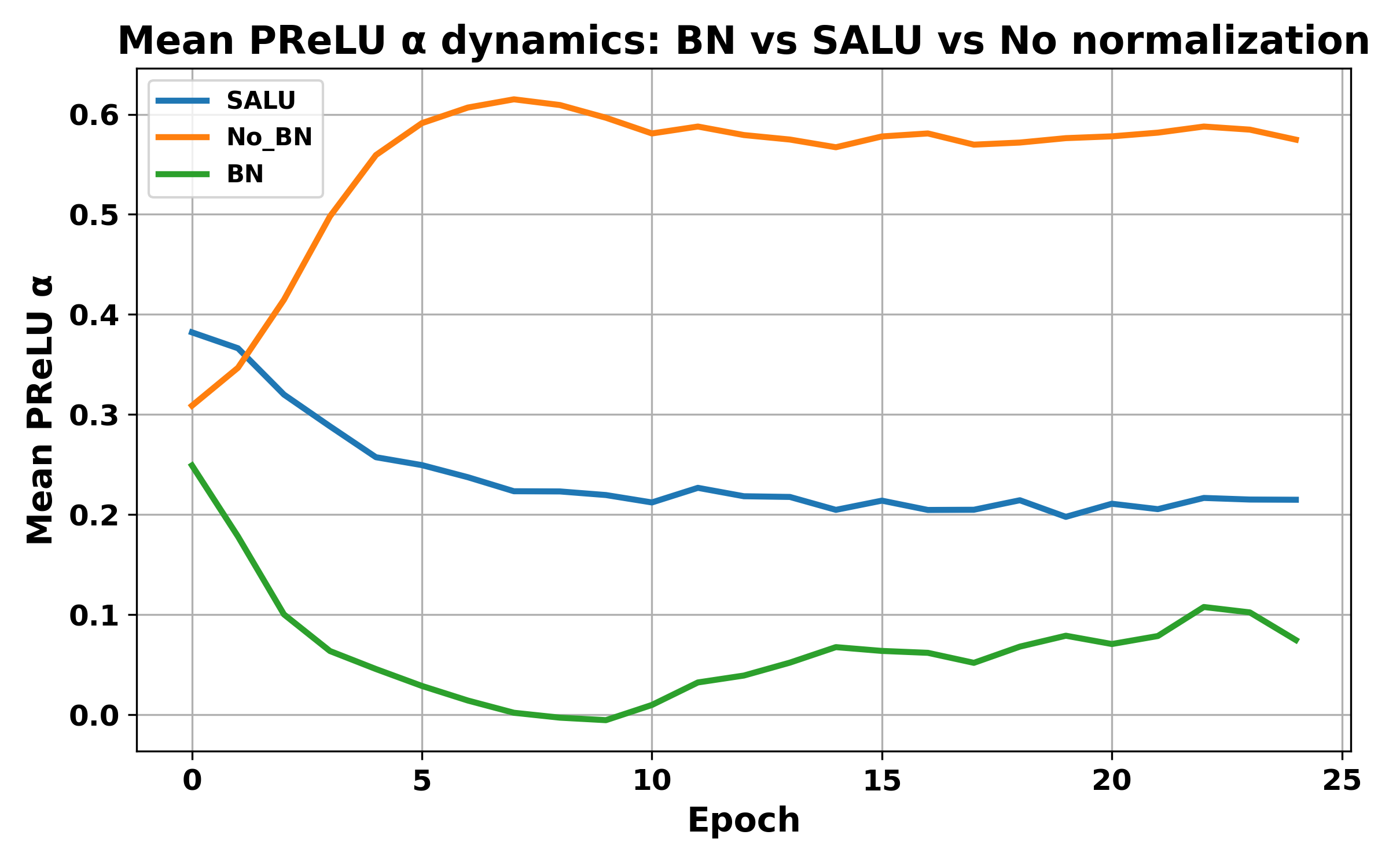}
    \caption{\textbf{Plasticity suppression in PReLU.} Evolution of the learnable slope $\alpha$ during training on CIFAR-10 using a 4-layer CNN with (orange), without (green) Batch Normalization , and with SALU-based stabilization (blue).  With BN, $\alpha$ rapidly collapses to a narrow range after approximately 10 epochs, indicating reduced adaptive dynamics. Without BN, $\alpha$ continues to drift significantly throughout training, reflecting unstable but highly adaptive behavior. In contrast, SALU maintains stable yet adaptive dynamics, preventing both collapse and uncontrolled drift.
    }
    \label{fig:prelu_motivation}
\end{figure}

\section{Introduction}

Deep neural networks owe much of their success to two fundamental components:
normalization layers that stabilize optimization dynamics \cite{ioffe2015batch,
ba2016layer}, and activation functions that introduce nonlinearity and expressive
power \cite{nair2010rectified, hendrycks2016gaussian}. These two mechanisms are
traditionally designed independently: normalization controls feature statistics,
while activations govern representational learnability.

Early work on learnable activations such as PReLU \cite{he2015delving}
partially challenged this separation by introducing adaptive parameters within the activation itself. However, despite this added flexibility, we observe that such parameters often exhibit limited learnability in the presence of normalization layers, which dominate the statistical dynamics of intermediate representations.

\textbf{The Suppression of Plasticity.} We illustrate this phenomenon in
Figure~\ref{fig:prelu_motivation}, where a 4-layer CNN is trained on CIFAR-10
using PReLU under three settings: with Batch Normalization (BN), without
normalization, and with our proposed SALU-based stabilization (see
Appendix~\ref{app:prelu_experiment} for full experimental details). Results are
averaged over 5 independent runs with different random seeds. With BN, the
learnable slope $\alpha$ rapidly collapses toward a narrow range of values after
a few epochs, followed by partial recovery and oscillations. Without BN, $\alpha$
continues to drift throughout training, exhibiting uncontrolled growth. In
contrast, under SALU-based stabilization, $\alpha$ remains both adaptive and
stable over the entire training process.

These observations reveal a fundamental tension: while normalization stabilizes
optimization by enforcing controlled feature distributions, it simultaneously
suppresses the learnability of activation parameters. This suggests that stability
and plasticity are not inherently incompatible, but their interaction is
fundamentally misaligned in standard architectures.

This motivates a shift in perspective: effective stabilization should emerge from
a mechanism that enables proper learning of activation parameters, rather than
from external normalization. Stability and learnability should therefore be jointly
realized within the design of the activation mechanism itself.

\textbf{From Statistics to Geometry.} This raises a fundamental question: instead
of relying on external, batch-dependent mechanisms that constrain neuron-level
learnability, can we design activations that preserve learnable parameters such as
$\alpha$ while intrinsically stabilizing signal propagation?

We argue that such a design should satisfy four fundamental properties:
(i) \textbf{boundedness}, to prevent uncontrolled signal growth;
(ii) \textbf{learnability}, to preserve adaptive behavior;
(iii) \textbf{statelessness}, to eliminate dependence on batch statistics; and
(iv) \textbf{self-contained stabilization}, avoiding external affine normalization
parameters such as $\gamma$ and $\beta$.

To achieve this, we introduce \textbf{SALU (Saturated Adaptive Linear Unit)}, a
bounded, learnable activation that provides intrinsic signal stabilization through
geometry-driven dynamics, without relying on batch statistics or external affine
parameters, and can directly replace conventional normalization layers such as
BatchNorm or LayerNorm. Building upon SALU as a foundational stabilization
primitive, we propose \textbf{SaluNet}, a normalization-free framework that extends
this principle across full network architectures.

Building upon SALU, we further introduce learnable variants of widely used gated
activations:

\begin{itemize}
    \item \textbf{SWALU}: a SALU-based extension of Swish, where sigmoid-based
    gating is replaced by a SALU-based gate, making the gating behavior fully
    learnable.

    \item \textbf{GALU}: a SALU-based generalization of GELU-style gating,
    designed for transformer and attention-based architectures, where SALU
    similarly drives the gating mechanism.
\end{itemize}

In both cases, SALU does not merely stabilize signal propagation, it acts as
a learnable gate whose shape, gain, and saturation adapt during training. We
define this capacity as \textbf{geometric plasticity}: the ability of a network
to dynamically adapt the geometry of its activation functions during training,
without relying on external statistical constraints. Together, SALU, SWALU, and
GALU form a unified framework in which stabilization, gating, and nonlinear
modulation all emerge from the same bounded adaptive principle, geometric
plasticity, enabling fully learnable signal propagation without external
normalization.

\textbf{Total Plasticity.} Conventional deep networks partition learnability across
distinct components: weights are learnable, while normalization layers impose
externally defined statistical constraints and activation mechanisms remain only
partially adaptive. This creates an intrinsic asymmetry between representation
learning and signal stabilization.

We address this limitation through the concept of \textbf{total plasticity}, in
which all major components of signal propagation become fully learnable:

\begin{itemize}
    \item \textbf{Connections} are learnable through network weights.
    \item \textbf{Stabilization} is learnable through SALU-based adaptive saturation.
    \item \textbf{Gating dynamics} are learnable through SWALU and GALU.
\end{itemize}

This is made possible by \textbf{geometric plasticity}: because SALU, SWALU, and
GALU all adapt their shape, gain, and saturation during training, stabilization
and gating are no longer fixed operations imposed on the network, they become
intrinsic degrees of freedom that co-evolve with the learned representations.
The resulting network behaves as a fully adaptive dynamical system in which
representation learning, gating, and stabilization co-evolve during training,
without any external statistical constraint.

\textbf{Architectural Instantiations.} We instantiate the principle of total
plasticity through two main architecture families:

\begin{itemize}
    \item \textbf{SaluNet-C (Convolutional)}: a ResNet-style architecture in
    which Batch Normalization is removed and replaced by SALU-based stabilization,
    while conventional activations such as ReLU are replaced by SWALU or GALU.
    Both variants are evaluated on CIFAR, while ImageNet experiments use SWALU.
    The choice between SWALU and GALU exhibits dataset-dependent behavior, which
    we analyze in the ablation study section ~\ref{sec:swalu_vs_galu}.

    \item \textbf{SaluNet-T (Transformer)}: a Vision Transformer architecture
    where Layer Normalization is replaced by SALU-based stabilization, and
    standard gated activations such as GELU or Swish are replaced by GALU or
    SWALU. GALU consistently outperforms SWALU in this setting, which we
    attribute to its GELU-style gating being better suited to attention-based
    architectures. The CIFAR-adapted variant is referred to as
    \textbf{SaluNet-T-CIFAR}.
\end{itemize}

\textbf{Empirical Contributions.} We validate our framework across diverse
architectures (\textbf{SaluNet-C-18}, \textbf{SaluNet-C-50}, and
\textbf{SaluNet-T-CIFAR}) and datasets (CIFAR-10, CIFAR-100, and ImageNet-1K).
Our key findings include:

\begin{itemize}
    \item \textbf{State-of-the-art performance among ResNet-18 models on CIFAR:}
    SaluNet-C-18 achieves $\mathbf{97.12\%}$ on average ($\mathbf{97.35\%}$ best)
    on CIFAR-10 and $\mathbf{83.13\%}$ on average ($\mathbf{83.25\%}$ best) on
    CIFAR-100, surpassing both BatchNorm-based baselines such as timm A2
    \citep{wightman2021resnet} and AdAutoMixup \citep{zhu2024}, and
    normalization-free approaches such as NF-ResNet \citep{brock2021high}, without
    any normalization layers.

    \item \textbf{Extreme robustness to batch size scaling:} At the unitary limit
    ($\text{BS}=1$), SaluNet-C-18 maintains $\mathbf{93.44\%}$ on CIFAR-10 and
    $\mathbf{76.23\%}$ on CIFAR-100, a regime where normalized architectures fail
    to converge. Performance remains highly stable across several orders of
    magnitude of batch sizes.

    \item \textbf{State-of-the-art performance on ImageNet under short training
    budgets:} SaluNet-C-50 achieves $\mathbf{78.67\%}$ Top-1 accuracy on
    ImageNet-1K under the standard $224\times224$ setting within only 90 epochs,
    and reaches $\mathbf{79.23\%}$ at $288\times288$ resolution, surpassing both
    BatchNorm-based baselines including ResNet Strikes Back A3
    \citep{wightman2021resnet} trained for 100 epochs, and normalization-free
    approaches such as NF-ResNet-50 \citep{brock2021characterizing}, without any
    normalization layers.

    \item \textbf{Generalization to Vision Transformers:} SaluNet-T-CIFAR
    outperforms the standard LayerNorm-GELU configuration, improving from
    $90.92\%$ to $\mathbf{91.01\%}$ on CIFAR-10 and from $66.54\%$ to
    $\mathbf{68.10\%}$ on CIFAR-100, demonstrating that learnable activation
    geometry extends effectively beyond convolutional networks.

    \item \textbf{Improved sample efficiency:} SaluNet models reach peak
    performance earlier in training, indicating faster convergence dynamics
    despite the complete absence of explicit normalization layers.
\end{itemize}

\textbf{Relation to Prior Work.} Normalization layers such as Batch Normalization
\citep{ioffe2015batch}, Layer Normalization \citep{ba2016layer}, RMSNorm
\citep{zhang2019root}, and Group Normalization \citep{wu2018group} have long been
considered essential to stable training, but impose fixed statistical constraints
that suppress neuron-level learnability. Recent work has explored replacing these
layers with learnable pointwise functions: Dynamic Tanh \citep{zhu2025transformersnormalization}
introduces $\text{DyT}(x) = \tanh(\alpha x)$ as a drop-in replacement for
normalization in Transformers, and Derf \citep{chen2025strongernormalizationfreetransformers} proposes
$\text{Derf}(x) = \text{erf}(\alpha x + s)$ as a more expressive alternative.
While both demonstrate strong performance, they are evaluated exclusively on
Transformer architectures. SaluNet differs in three fundamental ways: (i) it
eliminates normalization not by substituting a fixed functional form, but by
making stabilization itself fully learnable through geometric plasticity; (ii) it
extends learnable geometry to gating mechanisms via SWALU and GALU; and (iii) it
applies seamlessly to both convolutional networks and Vision Transformers under a
single unified principle of total plasticity.

\textbf{Summary of Contributions.}
\begin{enumerate}
    \item We identify the \textbf{plasticity suppression effect} induced by
    standard normalization layers, where learnable activation parameters such as
    PReLU's slope $\alpha$ cease meaningful adaptation during training, revealing
    a fundamental misalignment between external stabilization and neuron-level
    learnability.

    \item We introduce \textbf{SALU (Saturated Adaptive Linear Unit)}, a bounded,
    learnable activation that provides intrinsic signal stabilization through
    geometry-driven dynamics, without relying on batch statistics or external
    affine parameters, and can directly replace normalization layers such as
    BatchNorm and LayerNorm.

    \item We propose \textbf{SWALU} and \textbf{GALU}, extending SALU-based
    geometric plasticity to gated mechanisms across convolutional and transformer
    architectures, enabling fully learnable stabilization and gating within a
    single unified principle.

    \item We introduce \textbf{SaluNet}, a new paradigm for deep learning grounded
    in \textbf{total plasticity}, in which connections, stabilization, and gating
    dynamics are all fully learnable. This mirrors the intrinsic adaptability of
    biological neurons, which achieve stable signal propagation without any
    external statistical constraint.

    \item We validate SaluNet across diverse architectures and datasets,
    establishing state-of-the-art performance among ResNet-18 models on CIFAR,
    competitive large-scale performance on ImageNet under short training budgets,
    and remarkable robustness at batch size 1, a regime where normalized
    architectures fail to converge. We further demonstrate that SaluNet extends
    seamlessly to Vision Transformers, where SaluNet-T-CIFAR outperforms the
    standard LayerNorm-GELU configuration on both CIFAR-10 and CIFAR-100.
\end{enumerate}

\section{SALU: Saturated Adaptive Linear Unit}
\label{sec:salu}

\subsection{Definition and Intrinsic Geometry}

Unlike normalization layers that impose statistical constraints on the signal, and
unlike fixed pointwise alternatives such as DyT \cite{zhu2025transformersnormalization}
and Derf \cite{chen2025strongernormalizationfreetransformers}, SALU takes a
fundamentally different stance: it is neither a normalizer nor a regularizer. It
imposes \textbf{no constraints} on the signal and \textbf{no penalties} on the
weights. Instead, it acts as a \textbf{stabilizer} grounded in \textbf{geometric
plasticity}: its learnable geometry allows the signal to flow freely in both
forward and backward directions, adapting its shape to the needs of the network
without forcing the signal into a predetermined statistical profile.

We define \textbf{SALU (Saturated Adaptive Linear Unit)} as:
\begin{equation}
\label{eq:SALU_def}
\end{equation}
where $a$ and $b$ are learnable parameters jointly optimized with the network
weights. SALU is odd, smooth, bounded, and strictly increasing on $\mathbb{R}$.

Unlike normalization-based methods, SALU introduces no external affine parameters
such as $\gamma$ or $\beta$. Its behavior is entirely determined by its intrinsic
geometry:
\begin{itemize}
    \item the \textbf{slope at the origin} ($a$), controlling local gain,
    \item the \textbf{saturation amplitude} ($\pm \sqrt{a/b}$),
    \item the \textbf{transition scale} ($(ab)^{-1/2}$), determining how rapidly
    the function departs from linearity.
\end{itemize}
These quantities are not imposed constraints on the signal distribution; they are
adaptive geometric properties learned during training. Figure~\ref{fig:exemple}
illustrates the different geometric regimes induced by varying $(a,b)$, and
compares SALU with classical saturating activations such as $\tanh$ and sigmoid.

\begin{figure}
    \centering
    \begin{subfigure}[b]{0.5\textwidth}
        \centering
        \includegraphics[width=\linewidth, height=4.4cm]{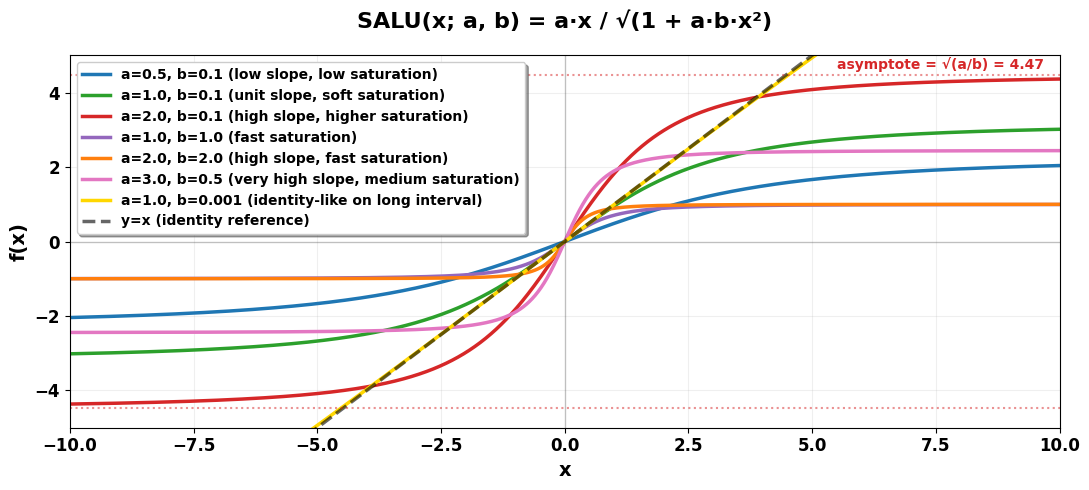}
        \caption{}
        \label{fig:sousfigA}
    \end{subfigure}
    \hfill
    \begin{subfigure}[b]{0.45\textwidth}
        \centering
        \includegraphics[width=\linewidth]{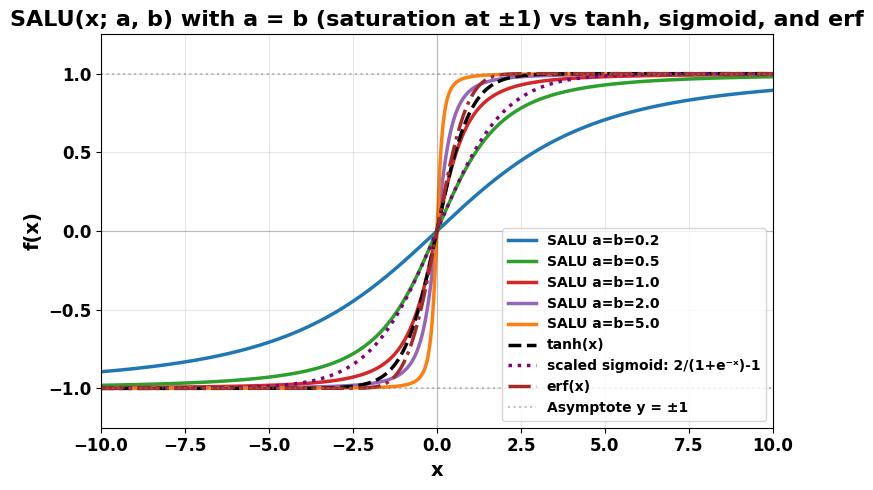}
        \caption{}
        \label{fig:sousfigB}
    \end{subfigure}
    \caption{\textbf{Visualization of the SALU activation function.}
    \textbf{(a)} Different geometric regimes induced by varying $(a,b)$.
    \textbf{(b)} Comparison with classical saturating activations.}
    \label{fig:exemple}
\end{figure}

\subsection{Mathematical Properties}

We summarize the core mathematical properties underlying SALU's stabilizing
behavior. Complete derivations are provided in Appendix~\ref{app:SALU_math}.

\subsubsection{Derivative and Gradient Propagation}

The derivative of SALU is:
\[
\frac{d}{dx}\operatorname{SALU}(x; a,b) = \frac{a}{(1+abx^2)^{3/2}}.
\]
\begin{equation}
\label{eq:SALU_derivative}
\end{equation}

The derivative reaches its maximum at the origin:
\[
\operatorname{SALU}'(0) = a,
\]
providing explicit learnable control over local gradient amplification.
Unlike $\tanh$, sigmoid, and erf, whose derivatives decay exponentially, SALU
exhibits polynomial decay:
\[
\operatorname{SALU}'(x) \sim |x|^{-3} \quad \text{as } |x| \to \infty.
\]
This preserves usable gradients over a wider dynamic range while still
attenuating excessively large activations. The decay behavior is controlled by
the product $ab$: larger values induce earlier saturation, while smaller values
maintain a broader near-linear regime. Figure~\ref{fig:SALU_derivatives}
illustrates this behavior under different parameter configurations and compares
it with classical smooth activations.

\begin{figure}[htbp]
    \centering
    \includegraphics[width=0.5\textwidth]{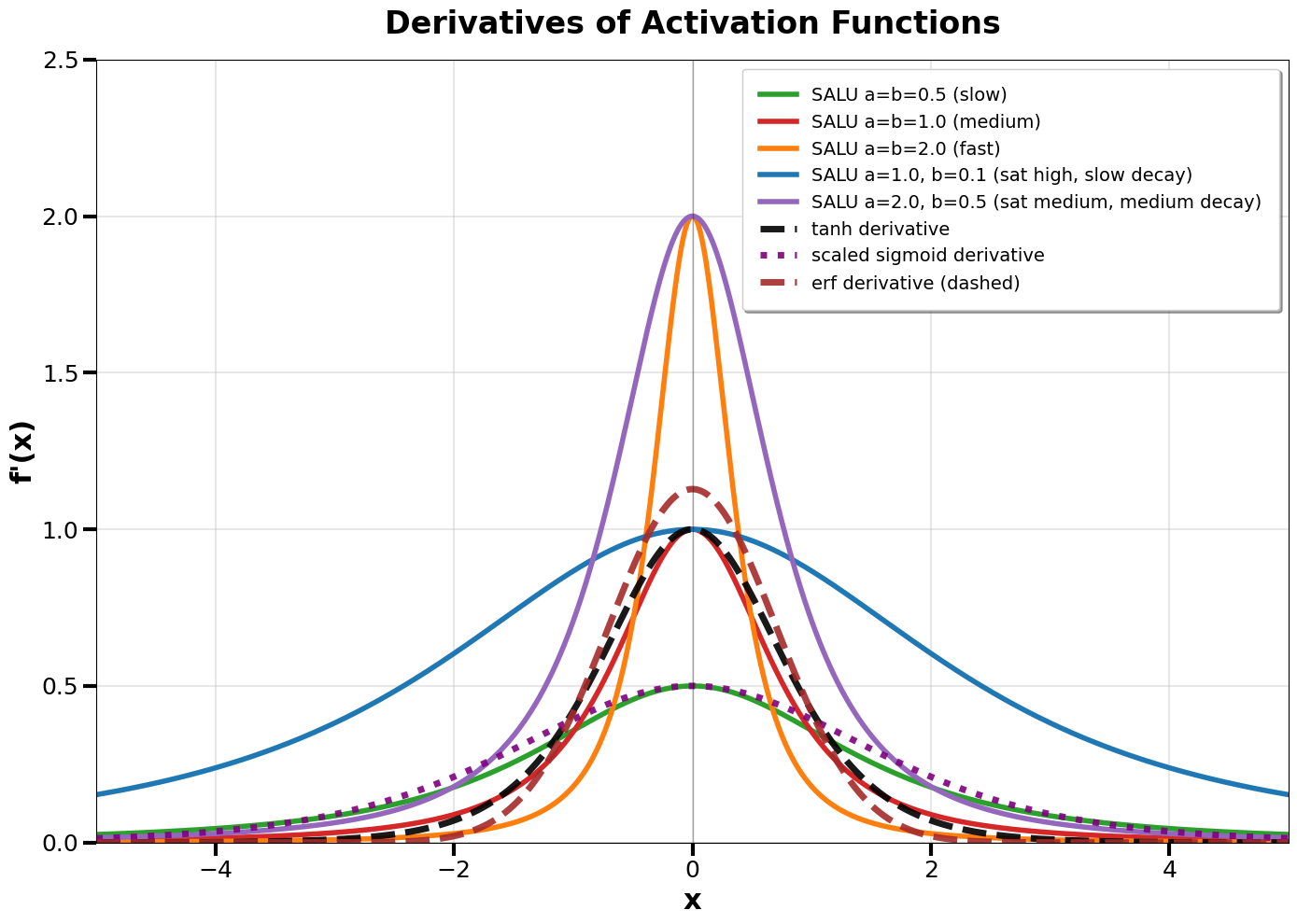}
    \caption{Derivatives of SALU under different parameter configurations
    compared with classical smooth activations.}
    \label{fig:SALU_derivatives}
\end{figure}

\subsubsection{Activation-Dependent Gradient Regulation}

For all $x \neq 0$, the derivative of SALU satisfies the following identity.
Since $\operatorname{SALU}(x)/x > 0$ for all $x \neq 0$ (both numerator and
denominator change sign simultaneously for $x < 0$), we have:
\begin{equation}
\operatorname{SALU}'(x)
=
\left(\frac{\operatorname{SALU}(x)}{x}\right)^3
\cdot
\frac{1}{a^2}.
\label{eq:SALU_derivative_relation}
\end{equation}
This identity, which we refer to as the \textbf{Self-Regulating Gradient}
property, directly links the gradient magnitude to the activation state
itself. As the neuron approaches saturation,
$\operatorname{SALU}(x)/x \to 0$, and the derivative automatically
decreases polynomially, producing an intrinsic form of gradient regulation
without any external mechanism. The complete derivation is provided in
Appendix~\ref{sec:gradient_relation}.

\subsubsection{Boundedness and Lipschitz Continuity}

For $a, b > 0$, SALU satisfies:
\begin{align}
|\operatorname{SALU}(x)| &\le \sqrt{\frac{a}{b}}, \\
|\operatorname{SALU}'(x)| &\le a.
\end{align}
Thus, SALU simultaneously provides bounded activations, bounded local gradient
amplification, and independent control of gain and saturation. Critically, the
Lipschitz constant depends \textbf{only} on the slope parameter $a$, independent
of the saturation amplitude $\sqrt{a/b}$. This decoupling ensures that amplitude
control does not increase maximal gradient amplification. Detailed proofs are
given in Appendix~\ref{sec:boundedness_proof} and
Appendix~\ref{sec:lipschitz_proof}.

\subsubsection{Parameter Gradients}

The gradients with respect to the learnable parameters admit closed-form
expressions:
\begin{align}
\frac{\partial}{\partial a}\operatorname{SALU}(x; a,b)
&=
\frac{\operatorname{SALU}(x)}{a}
-
\frac{b\,\operatorname{SALU}(x)^3}{2a^2},
\label{eq:grad_a}
\\
\frac{\partial}{\partial b}\operatorname{SALU}(x; a,b)
&=
-\frac{1}{2}
\frac{\operatorname{SALU}(x)^3}{a}.
\label{eq:grad_b}
\end{align}
These gradients depend directly on the activation magnitude itself. Small
activations produce minimal parameter updates, while saturated activations induce
stronger geometric adaptation, a self-reinforcing mechanism that underlies
the \textbf{geometric plasticity} of SALU. Complete derivations are provided in
Appendix~\ref{sec:parameter_gradients}.

\subsection{Comparison with Existing Approaches}

\subsubsection{Normalization vs. Regularization vs. Stabilization}

To clarify the conceptual distinction:

\begin{itemize}
    \item \textbf{Normalizers} (LayerNorm, BatchNorm) compute statistics $\mu$
    and $\sigma$ from activations and explicitly force the signal to have zero
    mean and unit variance (up to affine reparameterization). They \textbf{impose
    a distributional constraint} on the signal.

    \item \textbf{Regularizers} (weight decay, dropout, L1/L2 penalties) add
    terms to the loss function that constrain the model's capacity. They
    \textbf{impose optimization constraints} by penalizing certain configurations.

    \item \textbf{Pointwise alternatives} (DyT, Derf), while statistics-free,
    still rely on affine parameters $\gamma$ and $\beta$ to rescale and recenter
    the output, in addition to their respective learnable scalars ($\alpha$ for
    DyT; $\alpha$ and $s$ for Derf). They \textbf{adjust the signal after the
    nonlinearity}, implicitly constraining its range and center.

    \item \textbf{SALU} does none of these. It computes no statistics, adds no
    loss penalties, and requires no affine post-processing. Its parameters $a$
    and $b$ \textbf{are} the function, they define its intrinsic geometry, not
    external constraints applied to the signal.
\end{itemize}

\subsubsection{Geometric Adaptation to Distribution Shifts}

Recent work \cite{chen2025strongernormalizationfreetransformers} proposed four
desirable properties for pointwise normalization replacements: boundedness,
monotonicity, center sensitivity, and zero-centeredness. SALU satisfies the
first three properties, but does not strictly enforce the latter. Although SALU
is an odd function, this property alone does not guarantee centered outputs
unless the input distribution is itself centered.

Bounded activation functions are sensitive to shifts in the input distribution,
since large activation biases may prematurely push the function into saturated
regimes. Batch Normalization explicitly enforces input centering through batch
statistics, whereas SALU operates without such external normalization and handles
distribution shifts through its learnable geometry. The key insight is that SALU
does not rely on explicit input centering; it dynamically adapts its activation
geometry to the incoming distribution. Consider a scenario where pre-activations
exhibit a large positive mean $\mu$. Since both parameters $a$ and $b$ are
learnable, the network can implicitly compensate by:

\begin{itemize}
    \item Adjusting $a$ to control the effective local slope, while jointly
    modulating $ab$ to expand or contract the near-linear regime $1/\sqrt{ab}$.

    \item Adjusting the ratio $a/b$ to modify the saturation bound $\sqrt{a/b}$,
    thereby accommodating larger activation magnitudes when necessary.

    \item Jointly adapting both parameters to balance gain, saturation, and
    transition smoothness according to the optimization dynamics.
\end{itemize}

\subsection{Conceptual Contribution}

SALU challenges the prevailing assumption that stable deep learning requires
explicit signal normalization. Instead, it demonstrates that a carefully designed
parametric activation can stabilize information flow through its intrinsic
geometry alone. The signal is not constrained; it is \textbf{accommodated}. This
constitutes the foundational mechanism underlying \textbf{geometric plasticity}:
the ability of the network to dynamically adapt its activation geometry during
training, without relying on any external statistical constraint.

Key theoretical advantages of SALU include:
\begin{itemize}
    \item \textbf{Asymptotically polynomial gradient decay} compared to
    exponential saturation, preserving gradients over a wider dynamic range.

    \item \textbf{Decoupled control of saturation amplitude} ($\sqrt{a/b}$) and
    \textbf{transition scale} ($(ab)^{-1/2}$).

    \item \textbf{Activation-dependent gradient modulation}, where saturation
    directly attenuates gradients through the derivative of the activation.

    \item \textbf{Closed-form derivatives} for both activations and parameter
    gradients.

    \item \textbf{Explicit bounds} on activations and Lipschitz constants, with
    independent control of gain and saturation amplitude.
\end{itemize}

Together, these properties make SALU the foundational primitive of
\textbf{total plasticity}: by making stabilization itself fully learnable,
SALU enables deep networks to achieve stable signal propagation without
partitioning learnability across fixed normalization constraints and adaptive
weights.

\section{SALU-Based Gated Activations}
\label{sec:salu_Based}
The previous section established SALU as a learnable stabilizer for signal
propagation. However, modern architectures increasingly rely on \emph{gated}
activations such as Swish and GELU, which can be written in the generic form:
\[
x \mapsto x \cdot g(x),
\]
where $g(x)$ is a smooth nonlinear gate. A limitation of these activations is
that the gating geometry is fixed: the curvature, transition sharpness, and
saturation behavior are predetermined and identical across layers. SALU suggests
a different perspective: the geometry of the gate itself can be learned.

We therefore extend SALU from standalone activations to learnable gated
activations, introducing \textbf{SWALU} for convolutional architectures and
\textbf{GALU} for transformer-based architectures.

\subsection{SWALU: Swish Adaptive Learnable Unit}

Swish is a smooth, non-monotonic activation function defined as:
\[
\mathrm{Swish}(x) = x \cdot \sigma(x),
\]
where $\sigma(x)$ is the sigmoid function \cite{ramachandran2017searching}. It can
be interpreted as an input-dependent gating mechanism, where the signal $x$ is
modulated by a data-dependent gate $\sigma(x) \in (0,1)$, allowing the network
to smoothly control information flow instead of applying a hard threshold. Using
\[
\sigma(x) = \frac{1 + \tanh(x/2)}{2},
\]
Swish can be rewritten as:
\[
\mathrm{Swish}(x) = \frac{x}{2}\left(1 + \tanh\left(\frac{x}{2}\right)\right).
\]
Replacing $\tanh$ with SALU gives the \textbf{Swish Adaptive Learnable Unit
(SWALU)}:
\begin{equation}
\mathrm{SWALU}(x) = \frac{x}{2}\left(1 + \operatorname{SALU}(x;a,b)\right)
\label{eq:SWALU}
\end{equation}
With initialization $a = b = 1$, SWALU behaves similarly to Swish, while
remaining fully learnable throughout training.

\subsection{GALU: Gaussian Adaptive Learnable Unit}

GELU employs a smooth Gaussian-inspired gate \cite{hendrycks2016gaussian},
commonly approximated as:
\begin{equation}
\mathrm{GELU}(x) \approx \frac{x}{2}\left(1 + \tanh\left(\sqrt{\frac{2}{\pi}}
(x + 0.044715x^3)\right)\right).
\end{equation}
Replacing the fixed $\tanh$ gate with SALU yields the \textbf{Gaussian Adaptive
Learnable Unit (GALU)}:
\begin{equation}
\mathrm{GALU}(x) = \frac{x}{2}\left(1 + \operatorname{SALU}\left(
\sqrt{\frac{2}{\pi}}(x + 0.044715x^3); a, b\right)\right)
\label{eq:GALU}
\end{equation}
The parameters $a$ and $b$ allow the gate itself to adapt its slope, saturation
amplitude, and transition scale during training.

\subsection{Layer-Specific Geometry via Learnable Parameters}

A key theoretical property of SALU-based gated activations is that the learnable
parameters $(a, b)$ allow each layer to independently define its own activation
geometry. Depending on the values of $(a, b)$, several distinct regimes are
possible:

\begin{itemize}
    \item \textbf{Near-linear behavior}: weak curvature and large linear regime,
    \item \textbf{Soft saturation}: smooth nonlinear gating close to standard
    Swish/GELU,
    \item \textbf{Strong saturation}: near-binary gating with sharp transitions,
    \item \textbf{Steep transitions}: highly sensitive local nonlinearities.
\end{itemize}

Figure~\ref{fig:gated_shapes} illustrates these geometric regimes for SWALU and
GALU under different values of $(a,b)$. This theoretical flexibility suggests
that SALU-based activations act as \emph{layer-specific geometric operators},
capable of spanning a wide range of nonlinear behaviors within a single unified
parametric family. Whether and how this flexibility is exploited during training
is analyzed empirically in Section~\ref{sec:experiments}.

\begin{figure}[htbp]
    \centering
    \begin{subfigure}[b]{0.48\textwidth}
        \centering
        \includegraphics[width=\linewidth]{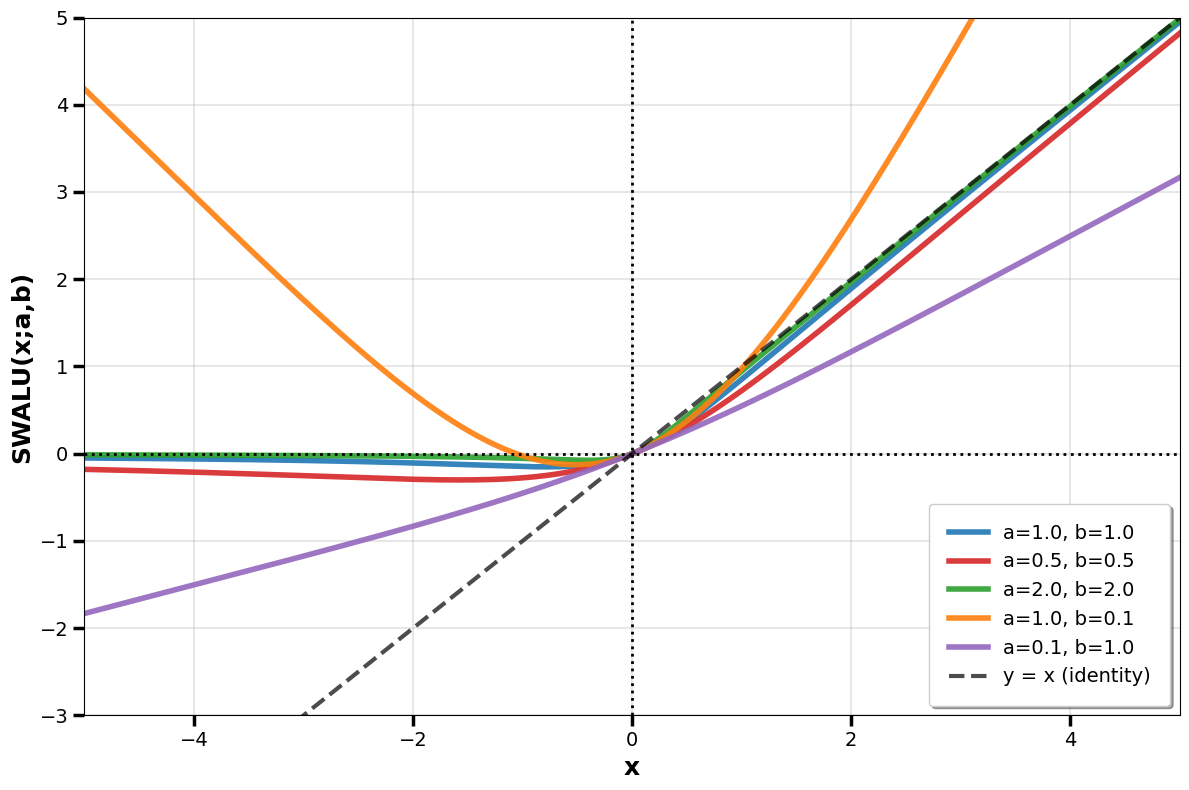}
        \caption{SWALU}
        \label{fig:swalu_shapes}
    \end{subfigure}
    \hfill
    \begin{subfigure}[b]{0.48\textwidth}
        \centering
        \includegraphics[width=\linewidth]{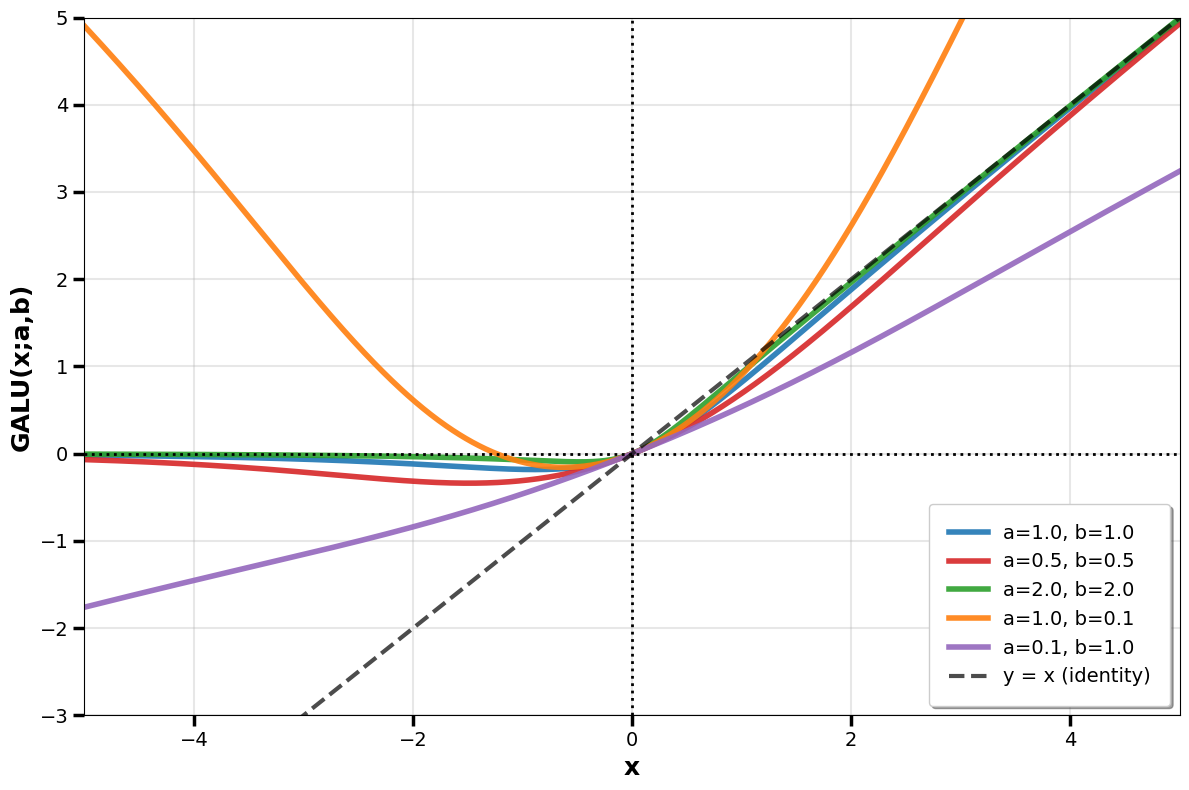}
        \caption{GALU}
        \label{fig:galu_shapes}
    \end{subfigure}
    \caption{\textbf{Geometric regimes of SWALU and GALU} for different values
    of $(a,b)$, illustrating the theoretical flexibility of the parametric
    family. Each regime corresponds to a distinct nonlinear behavior achievable
    within a single unified formulation.}
    \label{fig:gated_shapes}
\end{figure}

\subsection{Lipschitz Properties of Gated SALU Activations}
\label{sec:lipschitz}

The stabilizing properties of SALU extend naturally to its gated variants. Let
\[
M = \sqrt{\frac{a}{b}}
\]
denote the saturation amplitude of SALU. Using the boundedness and derivative
bounds established previously, one can derive explicit Lipschitz bounds for
SWALU and GALU. In particular:
\begin{align}
|\mathrm{SWALU}'(x)|
&\le
\frac{1}{2}(1+M) + \frac{|x|}{2}a,
\label{eq:SWALU_pointwise}
\\
|\mathrm{GALU}'(x)|
&\le
\frac{1}{2}(1+M)
+
\frac{|x|}{2}
a\sqrt{\frac{2}{\pi}}
\left(1 + 0.134145x^2\right).
\label{eq:GALU_pointwise}
\end{align}

When composed with preceding SALU layers, the inputs remain bounded inside
$[-M, M]$, yielding finite global Lipschitz constants:
\begin{align}
L_{\mathrm{SWALU}}
&\le
\frac{1}{2}
\left(1 + \sqrt{\frac{a}{b}}(1+a)\right),
\label{eq:SWALU_global}
\\
L_{\mathrm{GALU}}
&\le
\frac{1}{2}
\left(1 + \sqrt{\frac{a}{b}}\right)
+
\frac{1}{2}
\sqrt{\frac{a}{b}}
\,a
\sqrt{\frac{2}{\pi}}
\left(1 + 0.134145\frac{a}{b}\right).
\label{eq:GALU_global}
\end{align}

These bounds confirm that geometric plasticity, realized through learnable parameters $(a,b)$, does not compromise training stability: the Lipschitz constants remain finite and controlled, enabling total plasticity without pathological gradient amplification. Complete derivations and special cases are provided in Appendix~\ref{app:lipschitz}.

These bounds confirm that geometric plasticity, realized through learnable parameters $(a,b)$, does not compromise training stability: the Lipschitz constants remain finite and controlled, enabling total plasticity without pathological gradient amplification.

\section{Dynamical Analysis of SALU-Based Networks}
\label{sec:dynamics}

The preceding sections established the core properties of SALU: boundedness,
Lipschitz continuity, polynomial gradient decay, and learnable parameters. We
analyze how these properties influence signal and gradient propagation in deep
neural networks. Since SALU constitutes the fundamental stabilization mechanism
of the proposed framework, we first analyze its dynamical properties in isolation
before extending the discussion to the complete SALU-based residual and
transformer architectures.

\subsection{Forward Signal Propagation}

Consider a deep network where each layer computes
\begin{equation}
    h^{\ell+1} = W^\ell \phi(h^\ell; a_\ell, b_\ell),
\end{equation}
where $\phi$ denotes the SALU activation and
$W^\ell_{ij} \sim \mathcal{N}(0, \sigma_w^2/n)$
under standard mean-field assumptions.

The variance propagation dynamics are governed by
\begin{equation}
    v_{\ell+1}
    =
    \sigma_w^2
    \mathbb{E}_{x\sim\mathcal{N}(0,v_\ell)}
    [\phi(x)^2].
\end{equation}

A detailed fixed-point analysis of the variance dynamics is provided in
Appendix~\ref{app:fixed_points}.

Unlike ReLU-type activations, SALU is globally bounded:
\begin{equation}
    \phi(x)^2 \le \frac{a_\ell}{b_\ell}, \qquad \forall x \in \mathbb{R}.
\end{equation}

Therefore,
\begin{equation}
    v_{\ell+1} \le \sigma_w^2 \frac{a_\ell}{b_\ell},
\end{equation}
which guarantees that activation variance cannot diverge across depth. The
formal boundedness proof and asymptotic analysis are detailed in
Appendix~\ref{app:fixed_points}.

For small activation variance, SALU behaves approximately linearly:
\begin{equation}
    \phi(x) \approx a_\ell x,
\end{equation}
yielding
\begin{equation}
    v_{\ell+1} \approx \sigma_w^2 a_\ell^2 v_\ell.
\end{equation}

This defines the effective propagation gain
\begin{equation}
    \chi_0 = \sigma_w^2 a_\ell^2.
\end{equation}

The resulting dynamics exhibit three regimes:
\begin{itemize}
    \item $\chi_0 < 1$: ordered regime with vanishing signals,
    \item $\chi_0 > 1$: amplifying regime,
    \item $\chi_0 = 1$: critical regime (edge of chaos).
\end{itemize}

Unlike unbounded activations, SALU combines local amplification with global
saturation. Even when $\chi_0 > 1$, the variance dynamics remain bounded due
to the asymptotic saturation of the activation. A more detailed dynamical
systems analysis, including fixed-point stability and edge-of-chaos behavior,
is presented in Appendix~\ref{app:fixed_points}.

\subsection{Adaptive Signal Regulation}

A key distinction from fixed activations is that the parameters $a_\ell$ and
$b_\ell$ evolve during training. Consequently, the variance dynamics are not
static but adaptive. The parameter $a_\ell$ controls local amplification near
the origin, while $b_\ell$ controls the saturation threshold. Together, they
allow the network to regulate its own operating regime dynamically:
\begin{itemize}
    \item increasing $a_\ell$ enhances sensitivity and gradient flow,
    \item increasing $b_\ell$ strengthens saturation and stabilizes activations.
\end{itemize}

This creates a self-regulated propagation mechanism where the network can balance
expressivity and stability without explicit normalization layers.

\subsection{Gradient Propagation}

For a network of depth $L$, the backward dynamics are
\begin{equation}
    \frac{\partial \mathcal{L}}{\partial h^\ell}
    =
    \left(
    \prod_{k=\ell}^{L-1}
    D^k (W^k)^\top
    \right)
    \frac{\partial \mathcal{L}}{\partial h^L},
\end{equation}
where $D^k = \operatorname{diag}(\phi'(h^k))$.

Using the Lipschitz property $|\phi'(x)| \le a_\ell$, we obtain
\begin{equation}
    \left\|
    \frac{\partial \mathcal{L}}{\partial h^1}
    \right\|
    \le
    \left(
    \prod_{\ell=1}^{L-1}
    a_\ell \|W^\ell\|
    \right)
    \left\|
    \frac{\partial \mathcal{L}}{\partial h^L}
    \right\|.
\end{equation}

Thus, the learnable parameter $a_\ell$ directly controls gradient amplification
across depth. Under mean-field assumptions, the average gradient propagation
factor becomes
\begin{equation}
    \chi_\ell = \sigma_w^2 \mathbb{E}[\phi'(x)^2].
\end{equation}

Using the derivative of SALU (Equation~\ref{eq:SALU_derivative}):
\begin{equation}
    \phi'(x) = \frac{a_\ell}{(1 + a_\ell b_\ell x^2)^{3/2}},
\end{equation}
we obtain
\begin{equation}
    \chi_\ell
    =
    \sigma_w^2
    \mathbb{E}
    \left[
    \frac{a_\ell^2}{(1 + a_\ell b_\ell x^2)^3}
    \right].
\end{equation}

Unlike sigmoid activations whose gradients decay exponentially, SALU exhibits
polynomial gradient decay:
\begin{equation}
    \phi'(x) \sim |x|^{-3} \qquad \text{as } |x| \to \infty.
\end{equation}

This preserves non-negligible gradients over a wider dynamic range and improves
optimization stability in normalization-free networks.

\subsection{Comparison with Normalization-Based Dynamics}

Normalization methods such as BatchNorm impose fixed activation statistics during
training:
\begin{equation}
    \mathbb{E}[h^\ell] = 0, \qquad \operatorname{Var}(h^\ell) = 1.
\end{equation}

While this stabilizes training, it constrains the dynamical behavior of the
network and reduces activation plasticity. SALU follows a different strategy:
\begin{itemize}
    \item stability emerges from boundedness and adaptive saturation,
    \item signal propagation remains learnable through $a_\ell$ and $b_\ell$,
    \item activation dynamics self-regulate without explicit normalization.
\end{itemize}
This enables stable deep signal propagation while preserving adaptive nonlinear
behavior throughout training. This constitutes the dynamical foundation of
\textbf{geometric plasticity}: rather than imposing fixed statistical constraints,
the network learns to stabilize its own signal propagation, enabling
\textbf{total plasticity} across all components of the architecture.

\subsection{Extension to ResNets and Vision Transformers}

Although the previous analysis considered generic feed-forward networks, the
same stability principles extend qualitatively to residual and transformer
architectures. In ResNets, SALU replaces Batch Normalization inside residual
branches:
\begin{equation}
    h^{\ell+1} = h^\ell + F^\ell(\phi(h^\ell; a_\ell, b_\ell)),
\end{equation}
where the identity shortcut preserves gradient flow across depth, while SALU
ensures that the residual branch remains bounded and Lipschitz-controlled,
preventing uncontrolled variance growth.

In Vision Transformers, SALU replaces Layer Normalization inside feed-forward
blocks:
\begin{equation}
    x^{\ell+1} = x^\ell + \mathrm{MLP}(\phi(x^\ell; a_\ell, b_\ell)).
\end{equation}

In both cases, the boundedness and Lipschitz properties of SALU established in
Section~\ref{sec:salu} provide sufficient conditions for stable signal
propagation: activation variance cannot diverge, and gradient amplification
remains controlled through $a_\ell$. A full dynamical analysis of these
architectures in the presence of SWALU and GALU remains an open theoretical
question, which we leave for future work. The empirical validation of these
stability properties is provided in Section~\ref{sec:experiments}.

\section{Experiments}
\label{sec:experiments}

We evaluate SaluNet-C and SaluNet-T across convolutional and transformer
architectures on CIFAR-10, CIFAR-100, and ImageNet-1K. Our experiments are
designed to answer four key questions:

\begin{enumerate}
    \item Can learnable activations replace normalization without sacrificing
    accuracy?
    \item Do SALU and its gated variants (SWALU, GALU) exhibit complementary
    behavior?
    \item How robust is the framework to batch size variations?
    \item What geometric adaptations are induced by the learned parameters?
\end{enumerate}

\subsection{Experimental Setup}

\subsubsection{Architectures and Datasets}

We evaluate three architectures:

\begin{itemize}
    \item \textbf{SaluNet-C-18}: a ResNet-18 backbone for CIFAR-10 and
    CIFAR-100, modified by replacing the initial $7\times7$ convolution with a $3\times3$ kernel and removing the max-pooling layer. Batch Normalization is replaced by SALU, and ReLU by SWALU or GALU.

    \item \textbf{SaluNet-C-50}: a ResNet-50 backbone for ImageNet-1K, where all BatchNorm layers are replaced by SALU and ReLU by SWALU. While the computational overhead of GALU over SWALU is negligible on CIFAR-scale experiments, it becomes non-trivial at ImageNet scale due to the cubic term in the gating function (see Subsection~\ref{sec:efficiency}). We therefore restrict ImageNet experiments to SWALU.
    
    \item \textbf{SaluNet-T-CIFAR}: a Vision Transformer adapted for CIFAR-scale inputs \citep{omihub7772021vitCifar}, where LayerNorm is    replaced by SALU and GELU by GALU or SWALU.
\end{itemize}

All models are trained from scratch without pretraining or external initialization. Since the relative ranking between SWALU and GALU depends on the architecture and dataset, we report the best-performing variant for each setting (see Subsection~\ref{sec:swalu_vs_galu} for a detailed comparison).

\subsection{SaluNet-C-18 on CIFAR}

Table~\ref{tab:cifar_hyperparams} summarizes the hyperparameter configurations for SaluNet-C-18 on CIFAR-10 and CIFAR-100. A key aspect of our setup is the \textbf{decoupled optimization of activation parameters}: SALU, SWALU, and GALU parameters are trained with a higher learning rate (factor $2\times$) and zero weight decay, allowing the activation geometry to adapt rapidly to the evolving network dynamics. We observed that BatchNorm is sensitive to EMA decay in our training setup: values above $0.999$ cause the EMA model to collapse, while SaluNet remains stable with $\text{EMA decay} = 0.9997$. This difference is consistent with the slower and more stable parameter dynamics induced by geometric plasticity.

Table~\ref{tab:cifar_resnet18} presents the main results. On CIFAR-10, SaluNet-C-18 achieves $\mathbf{97.12 \pm 0.14\%}$ on average and $\mathbf{97.35\%}$ at best, surpassing the timm A2 baseline of $96.50\%$ trained with BatchNorm. On CIFAR-100, SaluNet-C-18 reaches $\mathbf{83.13\pm 0.19\%}$ on average and $\mathbf{83.25\%}$ at best, outperforming both
BatchNorm-based baselines (timm A2: $81.80\%$, AdAutoMixup: $82.32\%$) and the normalization-free NF-ResNet ($78.50\%$). The low variance across runs confirms stable optimization. These results establish SaluNet-C-18 as a strong normalization-free alternative to BatchNorm-based ResNet-18 models.

\begin{table}[htbp]
\centering
\caption{Hyperparameter configurations for SaluNet-C-18 training on CIFAR.}
\label{tab:cifar_hyperparams}
\begin{tabular}{@{}lcc@{}}
\toprule
\textbf{Configuration} & \textbf{SaluNet-C-18} & \textbf{BN+ReLU baseline} \\
\midrule
Epochs & 300 & 300 \\
Batch Size & 512 & 512 \\
Learning Rate & 0.3 & 0.3 \\
Scheduler & Cosine & Cosine \\
Warmup & 5 & 5 \\
EMA decay & 0.9997 & 0.997$^\dagger$ \\
Label Smoothing & 0.01 / 0.05 & 0.01 / 0.05 \\
\midrule
\textbf{Activation Params} & & \\
LR Factor & $2\times$ & -- \\
Weight Decay & 0.0 & -- \\
\midrule
\textbf{Augmentation} & & \\
Mixup/CutMix & 0.1--0.2 / 1.0 & 0.1--0.2 / 1.0 \\
AutoAugment/Cutout & Yes & Yes \\
\bottomrule
\end{tabular}
\end{table}
$^\dagger$ Higher EMA decay values cause EMA model collapse with BatchNorm in our torchvision-based implementation.

Table \ref{tab:cifar_resnet18} presents the main results on CIFAR-10 and CIFAR-100. SALU-based networks outperform strong BatchNorm baselines, with the best configuration (SALU + GALU) achieving \textbf{97.35\%} on CIFAR-10 and \textbf{83.25\%} on CIFAR-100.

Figure~\ref{fig:convergence_cifar100} illustrates the convergence dynamics of SaluNet-C-18 and ResNet-18 on CIFAR-100 over 300 epochs. Two observations stand out. First, the ResNet-18 EMA model (decay=$0.997$) exhibits persistent oscillations throughout training, reflecting the instability of exponential moving averaging when applied to BatchNorm's running statistics under our training recipe. In contrast, SaluNet-C-18 EMA (decay=$0.9997$) converges smoothly and monotonically, confirming that learnable geometric stabilization produces more regular optimization dynamics than external statistical normalization. Second, the right panel reveals a characteristic two-phase dynamic: ResNet-18 holds a slight advantage during the first $\sim$60 epochs, a period during which the SALU and SWALU parameters $a$ and $b$ are still exploring their geometric configuration. Once these parameters have converged toward their layer-specific roles, as documented in the geometric analysis of Section~\ref{sub:geom_analysis}, SaluNet-C-18 takes a definitive lead, with the accuracy gap growing consistently and stabilizing at $+1.36\%$ at epoch 299. This two-phase behavior provides direct empirical evidence that the performance gains of SaluNet emerge from the learned geometric plasticity of its activation parameters, rather than from any architectural advantage.

\begin{table}[htbp]
    \centering
    \caption{Main results on CIFAR-10/100 with SaluNet-C-18.}
    \label{tab:cifar_resnet18}
    \begin{tabular}{@{}lcccccc@{}}
        \toprule
        Method & Norm & Dataset & Epochs & Top-1 (\%) & Top-5 (\%) \\
        \midrule
        timm A2 \citep{wightman2021resnet} & BN & CIFAR-10 & 300 & 96.50 & -- \\
        \textbf{SaluNet-C-18} & \textbf{None} & \textbf{CIFAR-10} & 300 & \textbf{97.18 $\pm$ 0.17} (\textbf{97.35 best}) & \textbf{99.86 $\pm$ 0.05}  \\
        \midrule
        timm A2 \citep{wightman2021resnet} & BN & CIFAR-100 & 300 & 81.80 & -- \\
        AdAutoMixup \citep{zhu2024} & BN & CIFAR-100 & 300 & 82.32 & -- \\
        NF-ResNet \citep{brock2021high} & None & CIFAR-100 & 300 & 78.50 & -- \\
        \textbf{SaluNet-C-18} & \textbf{None} & \textbf{CIFAR-100} & 300 & \textbf{83.06 $\pm$ 0.19} (\textbf{83.25 best}) & \textbf{96.20 $\pm$ 0.40}  \\
        \bottomrule
    \end{tabular}
\end{table}

\begin{figure}[htbp]
    \centering
    \includegraphics[width=\linewidth]{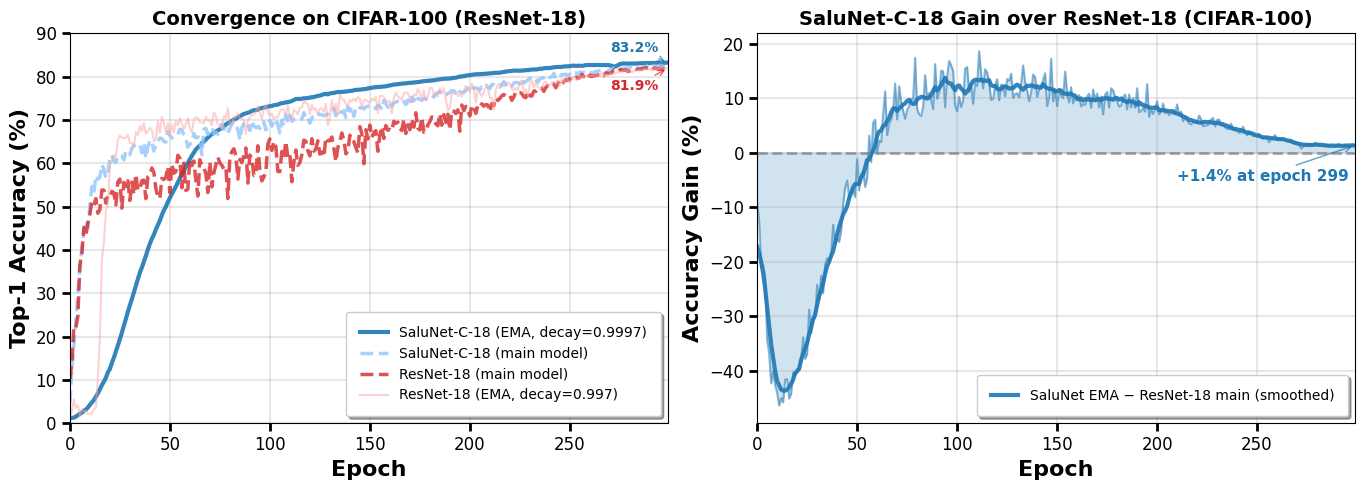}
    \caption{\textbf{Convergence on CIFAR-100 (ResNet-18).} (Left) Validation
    Top-1 accuracy across 300 epochs for SaluNet-C-18 and ResNet-18 under identical training conditions. The ResNet-18 EMA model (decay=0.997) exhibits trong oscillations, while SaluNet-C-18 EMA (decay=0.9997) converges smoothly and stably. (Right) Accuracy gain of SaluNet-C-18 EMA over the
    ResNet-18 EMA across epochs, showing a consistent and growing advantage
    throughout training.}
    \label{fig:convergence_cifar100}
\end{figure}

SaluNet-C-18 consistently improves performance on both CIFAR-10 and CIFAR-100. It reaches 97.12 ± 0.14 Top-1 and 99.86 ± 0.05 Top-5 on CIFAR-10, and 83.13 ± 0.19 Top-1 and 96.20 ± 0.40 Top-5 on CIFAR-100. The small variance across runs suggests stable optimization, while the best results further confirm the effectiveness of the proposed normalization-free design. Overall, these results indicate that SaluNet-C-18 is a competitive alternative to BN-based baselines in this setting.

\subsection{SaluNet-C-50 on ImageNet-1K}

We evaluate SaluNet-C-50 on ImageNet-1K using a 90-epoch training budget and compare it with representative BatchNorm-based and normalization-free baselines.
Table~\ref{tab:imagenet_hyperparams} summarizes the training configuration.

Two evaluation protocols are reported. In the primary protocol, training images
are downscaled to a maximum edge of 320 pixels using Lanczos interpolation, with
a $176\times176$ training crop and $224\times224$ evaluation crop. In the
standard protocol, both training and evaluation use $224\times224$ resolution,
yielding $78.67 \pm 0.10\%$ Top-1 accuracy, which further improves to $79.23
\pm 0.15\%$ when evaluated at $288\times288$ resolution.

\begin{table}[htbp]
\centering
\caption{ImageNet-1K training configuration for SaluNet-C-50 (90 epochs).
The primary protocol uses $176\times176$ training crops evaluated at
$224\times224$. The standard protocol uses $224\times224$ for both training
and evaluation.}
\label{tab:imagenet_hyperparams}
\begin{tabular}{lc}
\toprule
Configuration & Value \\
\midrule
Epochs & 90 \\
Effective Batch Size & 1024 \\
Optimizer & SGD \\
Momentum & 0.9 \\
Learning Rate & 0.4 \\
Scheduler & Cosine decay + warmup \\
Weight Decay & $2 \times 10^{-5}$ \\
Gradient Clipping & 0.5 \\
EMA & 0.9999 \\
Label Smoothing & 0.1 \\
\midrule
MixUp & 0.2 \\
CutMix & 1.0 \\
TrivialAugment (Wide) & Yes \\
Random Erasing & $p=0.5$ \\
\bottomrule
\end{tabular}
\end{table}

As shown in Table~\ref{tab:comparison_SaluNet_imagenet}, SaluNet-C-50 is
evaluated under two protocols. Under the primary protocol ($176\times176$
training, $224\times224$ evaluation), it achieves $\mathbf{78.75 \pm 0.14\%}$
Top-1 on average ($\mathbf{78.85\%}$ best) and $\mathbf{94.48 \pm 0.05\%}$
Top-5. Under the standard $224\times224$ protocol, it reaches $\mathbf{78.67
\pm 0.10\%}$ Top-1 and $\mathbf{94.50\%}$ Top-5, further improving to
$\mathbf{79.23 \pm 0.15\%}$ Top-1 when evaluated at $288\times288$ resolution.

Both configurations outperform standard ResNet-50 baselines at 90 epochs
($75.3$--$75.4\%$) \cite{goyal2017accurate, akiba2017extremely} and
normalization-free NF-ResNet-50 ($76.8\%$ with regularization)
\cite{brock2021characterizing}. Compared with the stronger timm A3 recipe
($78.1\%$ at 100 epochs with BatchNorm) \cite{wightman2021resnet}, SaluNet-C-50
surpasses this baseline within only 90 epochs and without any normalization
layers, establishing a strong normalization-free alternative for fast-converging
ImageNet training.

\begin{table}[htbp]
\centering
\caption{Comparison with prior ImageNet-1K training recipes at 90 epochs or
nearby budgets. $^\dagger$Training images downscaled to max edge 320px
(Lanczos), $176\times176$ train crop, $224\times224$ eval crop.
$^\ddagger$Standard $224\times224$ train and eval; $288\times288$ eval
resolution in parentheses.}

\label{tab:comparison_SaluNet_imagenet}
\begin{tabular}{lccccc}
\toprule
Method & Norm. & Epochs & Train/Eval Res. & Top-1 (\%) & Top-5 (\%) \\
\midrule
ResNet-50 standard \cite{he2016deep} & BN & 90 & 224/224 & 75.3 & -- \\
ImageNet Training in Minutes \cite{goyal2017accurate} & BN & 90 & 224/224 & 75.4 & -- \\
Extremely Large Minibatch SGD \cite{akiba2017extremely} & BN & 90 & 224/224 & 74.9 & -- \\
\midrule
NF-ResNet-50 \cite{brock2021characterizing} & None & 90 & 224/224 & 75.8 & -- \\
NF-ResNet-50 + reg. \cite{brock2021characterizing} & None & 90 & 224/224 & 76.8 & -- \\
\midrule
ResNet Strikes Back A3 \cite{wightman2021resnet} & BN & 100 & 224/224 & 78.1 & -- \\
\midrule
\textbf{SaluNet-C-50}$^\dagger$ & \textbf{None} & 90 & 176/224 &
\makecell[t]{\textbf{78.75 $\pm$ 0.10} \\ (\textbf{78.85 best})} &
\textbf{94.48 $\pm$ 0.05} \\

\textbf{SaluNet-C-50}$^\ddagger$ & \textbf{None} & 90 & 224/224 &
\makecell[t]{\textbf{78.60 $\pm$ 0.07} \\ 
(\textbf{78.67 best}) \\ 
(\textbf{79.23} @ 288)} &
\textbf{94.50 $\pm$ 0.06} \\

\bottomrule
\end{tabular}
\end{table}

\subsection{SaluNet-T-CIFAR on CIFAR}

To demonstrate the generality of our approach beyond convolutional networks,
we evaluate SaluNet-T-CIFAR on Vision Transformers using the efficient
CIFAR-adapted ViT implementation from \cite{omihub7772021vitCifar}. This model
has 6.3M parameters (compared to 86M for ViT-B) and is designed for
small-scale image classification. We follow the training recipe provided in
the original repository: Adam optimizer, cosine learning-rate decay from
$10^{-3}$ to $10^{-5}$ with 5-epoch warmup, weight decay $5 \times 10^{-5}$,
label smoothing 0.1, AutoAugment, and 200 epochs. Layer Normalization is
replaced by SALU, and GELU activations in the MLP blocks are replaced by
their gated variants.

\begin{table}
\centering
\caption{Performance on CIFAR with SaluNet-T-CIFAR.}
\label{tab:vit}
\begin{tabular}{lcc}
\toprule
Method & CIFAR-10 & CIFAR-100 \\
\midrule
ViT-CIFAR baseline (LN + GELU) \cite{omihub7772021vitCifar} & 90.92 & 66.54 \\
\midrule
\textbf{SaluNet-T-CIFAR} & \textbf{90.96 $\pm$ 0.04} (\textbf{91.01 best})
& \textbf{68.00 $\pm$ 0.08} (\textbf{68.10 best}) \\
\bottomrule
\end{tabular}
\end{table}

As shown in Table~\ref{tab:vit}, SaluNet-T-CIFAR yields consistent
improvements over the baseline on both datasets. On CIFAR-100, the gain is
substantial ($+1.46$ points), suggesting that learnable activation geometry
provides meaningful benefits in the transformer setting. On CIFAR-10, the
improvement is more modest ($+0.04$ points on average, $+0.09$ points at
best), which is consistent with the known saturation of this benchmark at
high accuracy levels. Together, these results demonstrate that the proposed
stabilization principle transfers effectively beyond convolutional networks
to transformer-based architectures.

\paragraph{Ongoing Experiments.} We are currently evaluating SaluNet-T on
ImageNet-1K, extending the normalization-free transformer framework to
large-scale vision benchmarks. Results will be reported in a future version
of this work.

\subsubsection{Emergent Activation Geometry in SaluNet-C-18}

To understand how stability and expressivity emerge in SaluNet-C-18, we analyze
the learned geometry of \textbf{SALU} and \textbf{SWALU} layers through two
invariant quantities:
\begin{itemize}
    \item \textbf{Saturation amplitude} $\sqrt{a/b}$: controls the maximum
    activation amplitude and determines the strength of nonlinear compression.
    \item \textbf{Linear regime width} $1/\sqrt{ab}$: defines the range over
    which the activation behaves approximately linearly.
\end{itemize}

All parameters are initialized as $a=1, b=0.1$ for SALU (yielding
$\sqrt{a/b} \approx 3.16$ and $1/\sqrt{ab} \approx 3.16$) and $a=b=1$ for
SWALU (yielding $\sqrt{a/b} = 1$ and $1/\sqrt{ab} = 1$). The divergence of
these values from initialization reveals how each layer adapts its geometry
to its functional role. Figure~\ref{fig:salu_geometry} and
Figure~\ref{fig:swalu_geometry} illustrate these invariants across layers,
and Table~\ref{tab:geom_regimes} summarizes the stage-level statistics.

\paragraph{1. Emergent Stabilization Structure in SALU}

The analysis of $\sqrt{a/b}$ and $1/\sqrt{ab}$ across layers reveals a clear
depth-dependent geometric stratification.

\textbf{Stem (feature entry regime).} The stem SALU converges to moderate
values ($\sqrt{a/b} \approx 0.89$, $1/\sqrt{ab} \approx 0.43$), slightly
below initialization, suggesting conservative compression at the network
entry point.

\textbf{First residual stage (high heterogeneity regime).} A strong dispersion
of geometric behavior emerges within L1. Some units develop very strong
saturation ($\sqrt{a/b} \approx 0.20$, $b \approx 176.7$), acting as variance
dampers, while others relax toward near-linear behavior ($\sqrt{a/b} \approx
7.30$, $1/\sqrt{ab} \approx 3.19$). This heterogeneity ($\sigma_{\sqrt{a/b}}
= 2.99$) reflects a structured diversification of geometric roles within the
same depth stage.

\textbf{Middle layers (transition regime).} L2 and L3 converge to intermediate
saturation levels ($\sqrt{a/b} \in [1.1, 3.6]$) with moderate linear regime
widths ($1/\sqrt{ab} \in [0.09, 1.4]$), suggesting a balanced regime where
stabilization and representation are jointly optimized.

\textbf{Downsampling blocks (structural bottlenecks).} These layers consistently exhibit narrow linear regimes ($1/\sqrt{ab} \approx 0.06$) with moderate saturation amplitudes ($\sqrt{a/b} \approx 2.2$), indicating that resolution transitions require rapid nonlinear compression to preserve signal
integrity across scales. This behavior is consistent with our empirical observation that downsampling SALU units require a quasi-linear initialization ($b \approx 10^{-4}$) in deeper architectures such as ResNet-50 to prevent signal blockage (see Section~\ref{sec:ablation_study}).

\textbf{Deep layers (expressivity regime).} In the final stages, L4 develops
extreme geometric specialization: $\sqrt{a/b}$ reaches values up to $171.9$
(L4.1.salu1, $b \approx 0.0009$), creating an exceptionally wide linear
regime ($1/\sqrt{ab} \approx 6.46$). This indicates that once stability is
achieved in earlier layers, deep SALU units prioritize linear expressivity
and feature disentanglement over nonlinear compression.

Overall, SALU exhibits a \textbf{tight-to-loose geometric transition}: strong
nonlinear compression in early and downsampling layers gives way to
progressively wider linear regimes in deeper layers.

\paragraph{2. Emergent Behavior in SWALU}

SWALU exhibits a complementary adaptive structure. Starting from $a=b=1$
at initialization, different layers diverge significantly:

\textbf{Stem SWALU.} The stem SWALU converges to a very small slope
($a \approx 0.048$), effectively reducing its gating strength. This suggests
that when the preceding SALU already provides strong stabilization, the
adjacent SWALU partially self-deactivates — an emergent division of labor
between the two mechanisms.

\textbf{Early and middle layers.} L1.1 develops strong gating
($\sqrt{a/b} \approx 5.74$), while L2 and L3 operate in intermediate regimes
($\sqrt{a/b} \in [1.6, 4.6]$), providing adaptive feature modulation without
destabilizing propagation.

\textbf{Deep layers.} L4 SWALU units converge to moderate gating
($\sqrt{a/b} \approx 1.0$), consistent with the relaxed saturation of
adjacent SALU units — the network jointly relaxes both stabilization and
gating in deeper stages to maximize representational capacity.

\paragraph{3. Functional Complementarity and Emergent Division of Labor}

A striking pattern emerges from the joint analysis: \textbf{SALU and SWALU co-adapt their geometry throughout training}. Where SALU applies strong
compression (early layers, downsampling blocks), adjacent SWALU units tend to reduce their gating strength. Conversely, where SALU relaxes toward linearity (deep layers), SWALU maintains moderate gating to preserve nonlinear expressivity. This emergent division of labor was not imposed by design — it arises naturally from the joint optimization of all geometric parameters under the principle of \textbf{total plasticity}.

\paragraph{4. Emergent Geometric Stratification}

The combined behavior reveals that SaluNet-C-18 self-organizes into three
functional regimes:
\begin{itemize}
    \item \textbf{Stabilization regime (early depth):} strong saturation and
    narrow linear regimes ensure controlled signal propagation.
    \item \textbf{Transition regime (intermediate depth):} balanced geometry
    enables feature refinement and discrimination.
    \item \textbf{Expression regime (deep layers):} wide linear regimes and
    relaxed gating maximize representational capacity and semantic separability.
\end{itemize}

This depth-dependent redistribution of geometric invariants demonstrates that
SaluNet does not rely on fixed normalization, but instead learns a continuous
geometric control mechanism over signal propagation — a direct manifestation
of \textbf{geometric plasticity}.

\begin{figure}
    \centering
    \includegraphics[width=0.7\linewidth]
    {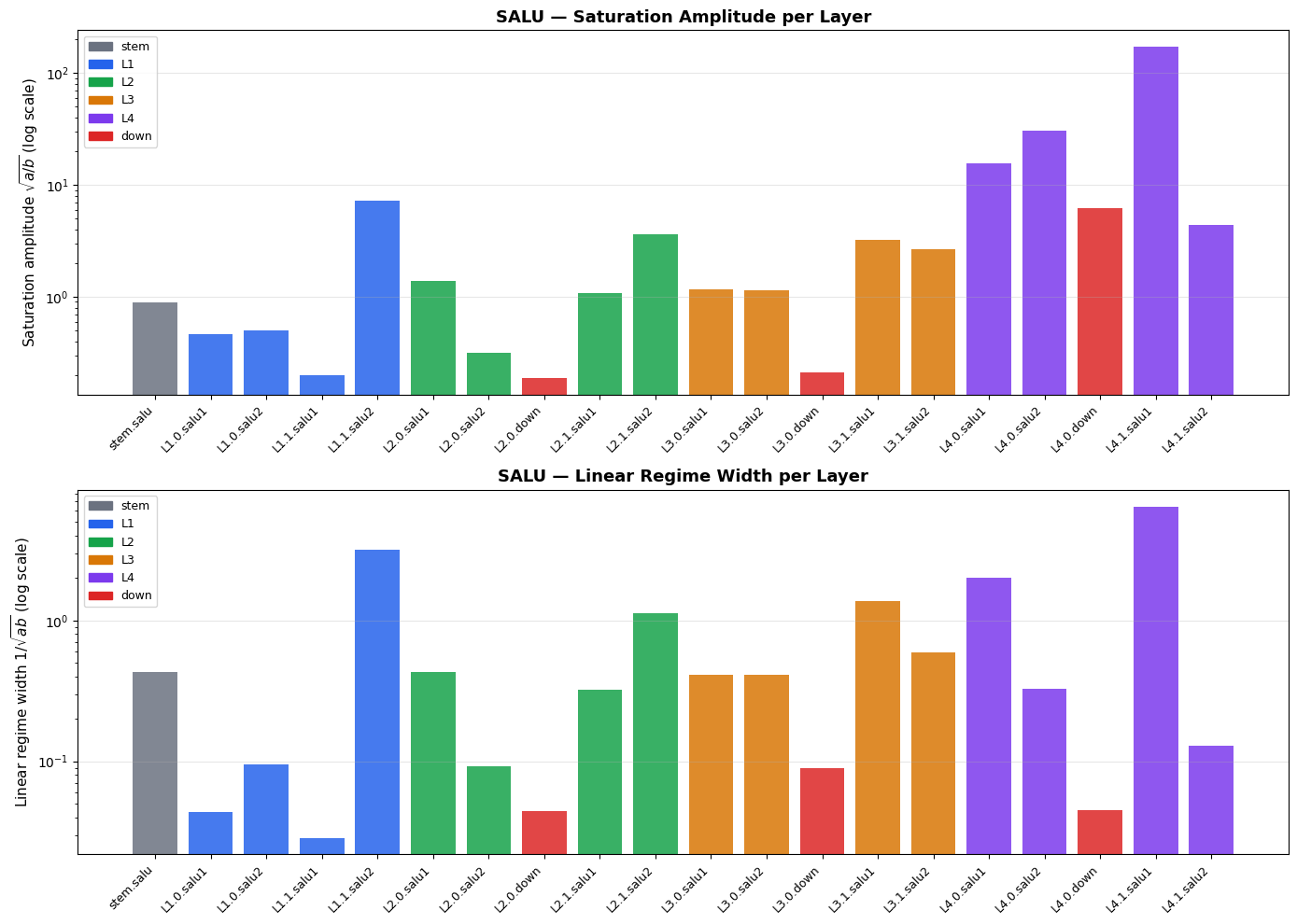}
    \caption{\textbf{Learned geometry of SALU layers in SaluNet-C-18
    (CIFAR-100).} (Top) Saturation amplitude $\sqrt{a/b}$ per layer in log
    scale. (Bottom) Linear regime width $1/\sqrt{ab}$ per layer in log scale.
    Colors indicate depth stage; red bars correspond to downsampling blocks.
    Both invariants diverge significantly from initialization, revealing a
    depth-dependent geometric stratification.}
    \label{fig:salu_geometry}
\end{figure}

\begin{figure}
    \centering
    \includegraphics[width=0.6\linewidth]    
    {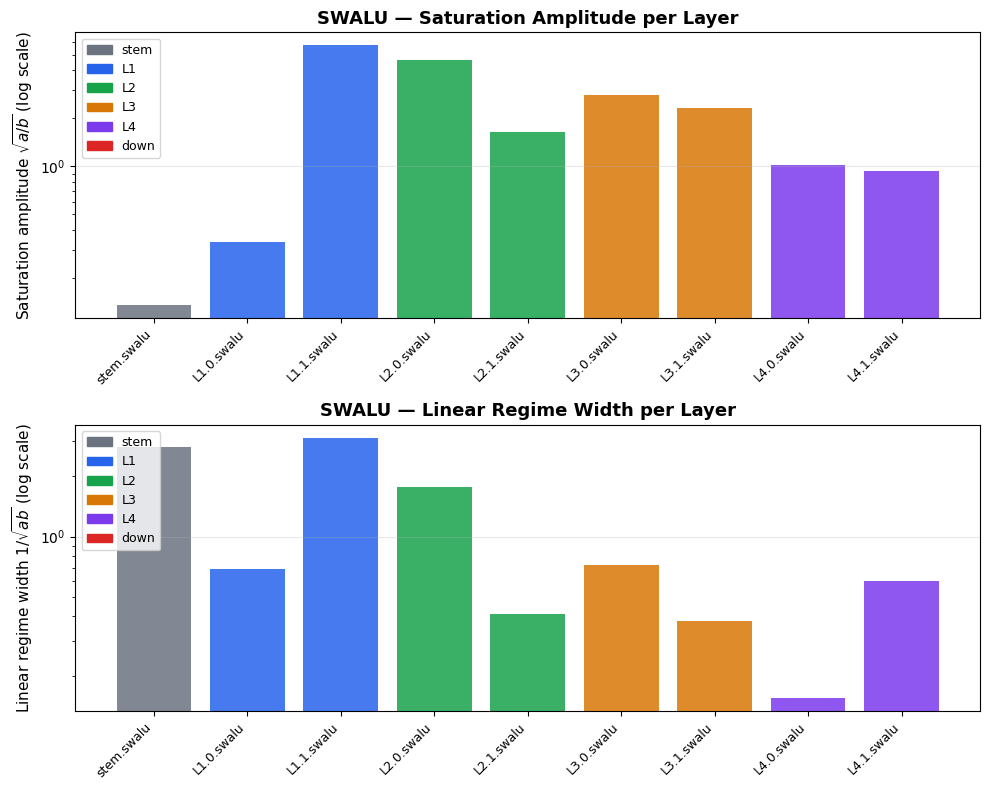}
    \caption{\textbf{Learned geometry of SWALU layers in SaluNet-C-18
    (CIFAR-100).} (Top) Saturation amplitude $\sqrt{a/b}$ per layer in log
    scale. (Bottom) Linear regime width $1/\sqrt{ab}$ per layer in log scale.
    SWALU geometry co-adapts with adjacent SALU layers, exhibiting reduced
    gating where SALU compression is strongest.}
    \label{fig:swalu_geometry}
\end{figure}

\begin{table}
\centering
\caption{Geometric regime statistics of SALU in SaluNet-C-18 (CIFAR-100),
aggregated per depth stage. Downsampling blocks are shown separately.}
\label{tab:geom_regimes}
\begin{tabular}{lcccc}
\toprule
\textbf{Stage} & \multicolumn{2}{c}{$\sqrt{a/b}$} &
\multicolumn{2}{c}{$1/\sqrt{ab}$} \\
\cmidrule(lr){2-3} \cmidrule(lr){4-5}
& mean & std & mean & std \\
\midrule
Stem          & 0.891 & --    & 0.434 & --    \\
L1            & 2.115 & 2.995 & 0.840 & 1.359 \\
L2            & 1.601 & 1.226 & 0.493 & 0.386 \\
L3            & 2.055 & 0.922 & 0.701 & 0.402 \\
L4            & 55.69 & 67.78 & 2.233 & 2.549 \\
Downsampling  & 2.212 & 2.847 & 0.060 & 0.021 \\
\bottomrule
\end{tabular}
\end{table}

\subsection{Geometry and Statistical Dynamics}
\label{sub:geom_analysis}

To uncover the mechanisms enabling SaluNet to operate without explicit
normalization layers, we conduct a dual analysis of its geometric organization
and intermediate activation statistics across network depth using the CIFAR-100
test set.

We quantify representation geometry using the \textit{effective rank}
\citep{roy2007effective} and the \textit{Fractional Isotropic Index}
$\mathcal{I}(\mathbf{X})$, which we define as a measure of how uniformly
the representation energy is distributed across feature dimensions. Both
are computed from the eigenvalue spectrum $\lambda_i$ of the centered
feature covariance matrix:
\begin{equation}
\text{EffRank}(\mathbf{X}) = \exp\left(-\sum_{i=1}^{d} p_i \log p_i\right),
\quad p_i = \frac{\lambda_i}{\sum_j \lambda_j}
\end{equation}
\begin{equation}
\mathcal{I}(\mathbf{X}) = \frac{\left(\sum_{i=1}^{d} \lambda_i\right)^2}
{d \sum_{i=1}^{d} \lambda_i^2}
\end{equation}

where $\mathcal{I}=1$ corresponds to a perfectly isotropic distribution.
Effective rank measures the diversity of utilized feature dimensions, while
isotropy quantifies how uniformly representation energy is distributed across
directions in the latent space.

To characterize the shape of activation distributions, we complement this
geometric perspective with second- and higher-order centralized moments:
variance $\sigma^2$, skewness $\gamma$, and excess kurtosis $\kappa$:
\begin{equation}
\sigma^2 = \mathbb{E}[(x-\mu)^2], \quad
\gamma = \frac{\mathbb{E}[(x-\mu)^3]}{\sigma^3}, \quad
\kappa = \frac{\mathbb{E}[(x-\mu)^4]}{\sigma^4} - 3, \quad
\mu = \mathbb{E}[x]
\end{equation}
We report the excess kurtosis, so that a Gaussian distribution yields
$\kappa = 0$. Variance captures dispersion around the mean, skewness
quantifies distribution asymmetry, and kurtosis reflects tail heaviness
and the concentration of extreme activations.

Results are summarized in Table~\ref{tab:geometric_analysis} and
Table~\ref{tab:statistical_moments}, and illustrated in
Figure~\ref{fig:geometry} and Figure~\ref{fig:moments}.

\begin{table}
\centering
\caption{Geometric analysis of representations: BN+ReLU vs. SaluNet
(SALU+SWALU). All metrics computed on trained models using the CIFAR-100
test set.}
\label{tab:geometric_analysis}
\begin{tabular}{@{}lcccc@{}}
\toprule
& \multicolumn{2}{c}{\textbf{Effective Rank}} &
\multicolumn{2}{c}{\textbf{Isotropy ($\mathcal{I}$)}} \\
\cmidrule(lr){2-3} \cmidrule(lr){4-5}
Layer & BN+ReLU & SaluNet & BN+ReLU & SaluNet \\ \midrule
Conv1    & 12.3  & 14.3  & 0.0870 & 0.1038 \\
Layer1   & 23.8  & 35.6  & 0.1524 & 0.2775 \\
Layer2   & 56.2  & 65.3  & 0.2115 & 0.2580 \\
Layer3   & 119.3 & 140.0 & 0.2503 & 0.3302 \\
Layer4   & 36.9  & 113.0 & 0.0612 & 0.1309 \\
Features & 20.6  & 22.7  & 0.5214 & 0.6046 \\ \midrule
\textbf{Mean} & \textbf{44.8} & \textbf{65.2} &
\textbf{0.2140} & \textbf{0.2842} \\
\bottomrule
\end{tabular}
\end{table}

\begin{table}
\centering
\caption{Statistical moments of activations: BN+ReLU vs. SaluNet.
Kurtosis is reported as excess kurtosis ($\kappa=0$ for a Gaussian).}
\label{tab:statistical_moments}
\begin{tabular}{@{}lcccccc@{}}
\toprule
& \multicolumn{2}{c}{\textbf{Variance} ($\sigma^2$)} &
\multicolumn{2}{c}{\textbf{Skewness} ($\gamma$)} &
\multicolumn{2}{c}{\textbf{Excess Kurtosis} ($\kappa$)} \\
\cmidrule(lr){2-3} \cmidrule(lr){4-5} \cmidrule(lr){6-7}
Layer & BN+ReLU & SaluNet & BN+ReLU & SaluNet &
BN+ReLU & SaluNet \\ \midrule
Conv1    & 0.0262 & 0.0644 & -0.1099 & -0.2538 & 23.1658 & 42.4112 \\
Layer1   & 0.0032 & 0.0089 &  1.6968 &  2.0215 &  3.7955 &  4.8961 \\
Layer2   & 0.0034 & 0.0063 &  2.1096 &  3.0700 &  6.0066 & 12.5939 \\
Layer3   & 0.0009 & 0.0048 &  5.0773 &  6.7482 & 40.0202 & 84.9206 \\
Layer4   & 0.1291 & 3.6840 &  2.0906 &  1.2210 &  4.0428 &  5.1524 \\
Features & 0.5862 & 0.8395 &  5.3701 &  0.0059 & 38.1348 &  0.0316 \\ \midrule
\textbf{Mean} & \textbf{0.1248} & \textbf{0.7680} &
\textbf{2.7058} & \textbf{2.1355} &
\textbf{19.1943} & \textbf{25.0010} \\
\bottomrule
\end{tabular}
\end{table}

\begin{figure}
    \centering
    \includegraphics[width=0.8\linewidth]
    {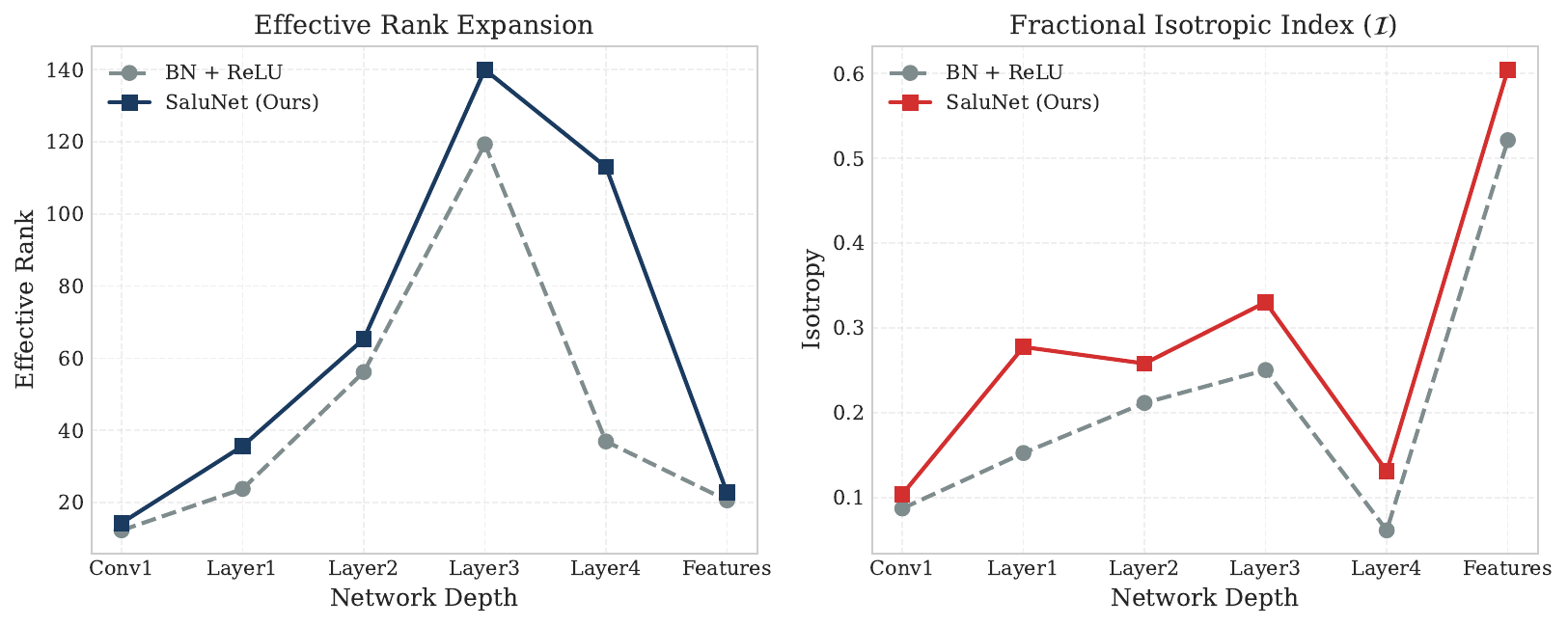}
    \caption{\textbf{Representational Geometry on CIFAR-100.} (Left) Effective
    Rank across network depth. (Right) Fractional Isotropic Index $\mathcal{I}$.
    SaluNet actively prevents dimensional collapse in deeper layers, preserving
    a $+206\%$ higher rank and $+114\%$ higher isotropy in \texttt{Layer4}.}
    \label{fig:geometry}
\end{figure}

\begin{figure}
    \centering
    \includegraphics[width=\linewidth]{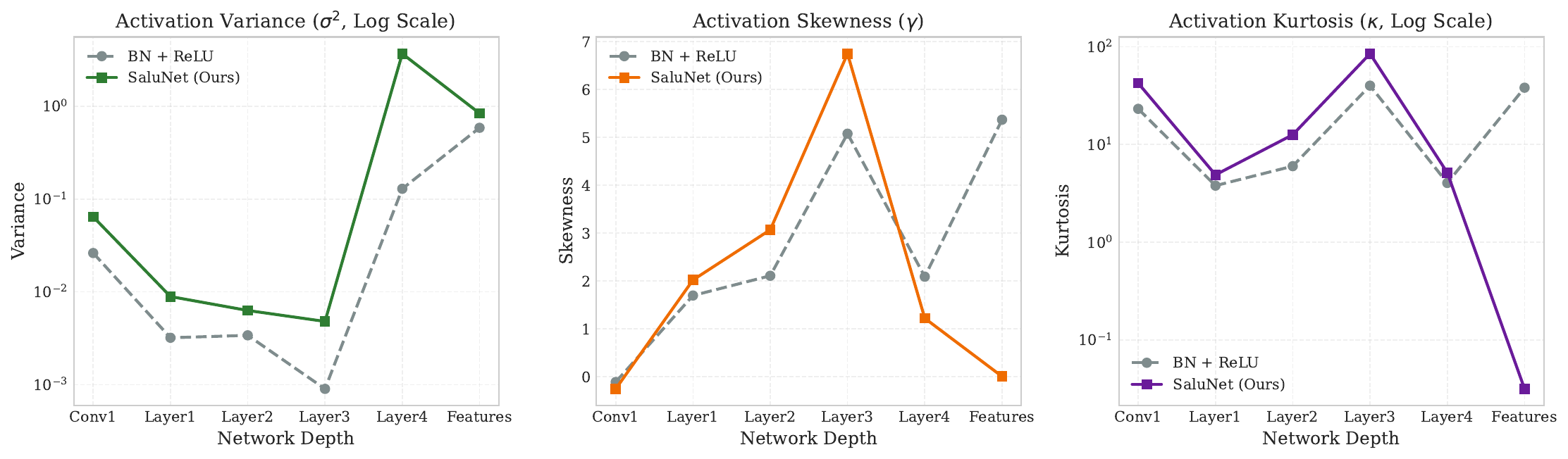}
    \caption{\textbf{Statistical Moments of Activations.} (Left) Activation
    Variance ($\sigma^2$, log scale). (Center) Activation Skewness ($\gamma$).
    (Right) Excess Kurtosis ($\kappa$, log scale). SaluNet avoids signal
    vanishing while naturally regularizing its output layer toward a symmetric,
    quasi-Gaussian distribution ($\gamma \to 0$, $\kappa \to 0$).}
    \label{fig:moments}
\end{figure}

\subsubsection{Key Findings}

The joint analysis yields three critical insights into SaluNet's capacity to
self-regulate without explicit normalization:

\begin{itemize}
    \item \textbf{Prevention of Dimensional Collapse.} As illustrated in
    Figure~\ref{fig:geometry}, BN+ReLU experiences a severe geometric
    contraction in deeper stages: at \texttt{Layer4}, effective rank drops
    sharply from $119.3$ to $36.9$, while isotropy falls to $0.0612$. This
    behavior is consistent with the dimensional collapse phenomenon commonly
    observed in deep residual networks \citep{papyan2020prevalence}, where
    feature diversity decreases in deeper layers. SaluNet, governed by the
    principle of \textbf{total plasticity}, actively counters this collapse:
    effective rank reaches $113.0$ at \texttt{Layer4} ($+206\%$) and isotropy
    improves to $0.1309$ ($+114\%$), demonstrating that learnable activation
    geometry organically preserves the richness of the semantic subspace.

    \item \textbf{Organic Self-Normalization at the Output Layer.} A notable
    phenomenon emerges at the final embedding layer (\texttt{Features}). While
    the BN+ReLU distribution becomes heavily asymmetric ($\gamma = 5.3701$) and
    heavy-tailed ($\kappa = 38.1348$), SaluNet converges toward a symmetric,
    quasi-Gaussian distribution ($\gamma = 0.0059$, $\kappa = 0.0316$), as
    shown in Figure~\ref{fig:moments}. By bounding the final manifold through
    its activation geometry, SaluNet naturally drives the output distribution
    toward symmetry, effectively regularizing the classifier input space and
    facilitating linear separability without any explicit normalization.

    \item \textbf{High-Energy Signal Propagation with Controlled Extrema.}
    Throughout the network, BatchNorm rigidly suppresses activation energy
    (mean variance $\sigma^2 = 0.1248$). SaluNet operates in a higher-energy
    regime (mean variance $\sigma^2 = 0.7680$), peaking at \texttt{Layer4}
    ($\sigma^2 = 3.6840$). The elevated excess kurtosis at \texttt{Layer3}
    ($\kappa = 84.9206$) indicates that SaluNet drives sparse, highly
    informative features in intermediate stages. Crucially, these heavy-tailed
    dynamics do not degenerate into numerical instability: the bounded adaptive
    geometry of SALU and SWALU progressively smooths the signal, yielding the
    stable and well-regularized output observed at the final layer.
\end{itemize}

Together, these results confirm that \textbf{geometric plasticity} does not
merely stabilize training, it actively shapes the representational geometry
of the network across depth, preventing dimensional collapse and enabling
organic self-normalization without any external statistical constraint.

\subsection{Discussion of Convergence and Robustness}

On CIFAR-100, SaluNet-C-18 achieves $\mathbf{83.25\%}$ Top-1 accuracy ($83.13 \pm 0.19\%$ mean), surpassing BatchNorm-based baselines including timm A2 ($81.80\%$) and AdAutoMixup ($82.32\%$), as well as the normalization-free NF-ResNet ($78.50\%$). On CIFAR-10, the model reaches $\mathbf{97.35\%}$ Top-1 and $\mathbf{99.86\%}$ Top-5, confirming that learnable activation geometry provides effective feature extraction across both datasets.

On ImageNet-1K, SaluNet-C-50 demonstrates strong convergence dynamics:
peak accuracy is consistently reached between epochs 75 and 80, well before the end of the 90-epoch budget. This early convergence suggests that SALU-based stabilization provides a well-conditioned loss landscape that facilitates rapid optimization without batch-level dependencies.

Unlike normalization-free approaches such as NF-ResNet, which require explicit regularization to match BN-based performance at comparable epoch counts, SaluNet reaches competitive or superior accuracy within standard training budgets. These results suggest that learnable gated activations offer an effective alternative to statistical normalization, achieving both stability and expressivity through geometric plasticity alone.

\subsection{Efficiency and Computational Complexity}
\label{sec:efficiency}
The \textbf{SALU} operator is mathematically parsimonious, requiring only
elementary algebraic operations per element: a multiplication, a squaring,
an addition, and a square root. Crucially, \textbf{SALU avoids transcendental
functions} such as $\exp(\cdot)$, $\tanh(\cdot)$, or $\sigma(\cdot)$, which
are pervasive in other normalization-free or gated activations (e.g., Swish,
GELU, or NF-Net scaled activations). On modern GPU architectures, transcendental
operations are significantly more cycle-intensive than the algebraic primitives
used in SALU, making our framework theoretically more hardware-friendly.

From a system perspective, SALU offers two structural advantages over Batch
Normalization:

\begin{itemize}
    \item \textbf{Pointwise independence:} unlike BatchNorm, which requires
    computing aggregate statistics across the batch dimension, SALU is strictly
    pointwise. This eliminates the need for expensive cross-device synchronization
    (e.g., \texttt{SyncBatchNorm}) in distributed training.

    \item \textbf{Batch-size independence:} the absence of batch-level
    dependencies ensures consistent training throughput regardless of the
    number of GPUs or the batch distribution.
\end{itemize}

Our current implementation relies on high-level PyTorch autograd primitives without custom CUDA kernels, which introduces a modest overhead of approximately $5$--$10\%$ per epoch compared to the highly optimized cuDNN implementations of BatchNorm. We attribute this gap to the absence of kernel fusion for the SALU operations rather than to any fundamental computational inefficiency.
Implementing native fused kernels for SALU and its gated variants is left as future work, and is expected to close or reverse this gap.

\section{Ablation Study}
\label{sec:ablation_study}

\subsection{Resilience to Batch Size Scaling}
\label{subsec:batch_robustness}

A primary criticism of normalization-free networks is their sensitivity to
batch size variations, particularly in the low-batch regime where Batch
Normalization fails due to unreliable statistics. To assess the robustness
of our framework, we evaluate SaluNet-C-18 across  different of magnitude of batch sizes on CIFAR-10 and CIFAR-100, from $\text{BS}=512$ down to the
unitary limit ($\text{BS}=1$). On CIFAR-100, we compare against a Resnet-18 (BN+ReLU)
baseline trained under identical conditions.

Table~\ref{tab:bs_scaling} and Figure~\ref{fig:salu_geometry} show  the results. The comparison reveals a
striking asymmetry between the two frameworks on CIFAR-100, which we
interpret through the lens of data density and geometric plasticity.

\textbf{Low batch size regime ($\text{BS} \le 16$).} BatchNorm fails
catastrophically: at $\text{BS}=1$ it achieves only $\sim 1\%$ accuracy,
and even at $\text{BS}=8$ it reaches only $67.9\%$, far below its
full-batch performance. This collapse is a direct consequence of unreliable
batch statistics. SaluNet, by contrast, maintains strong performance
throughout: $\mathbf{76.23\%}$ at $\text{BS}=1$ and above $80\%$ from
$\text{BS}=4$ onward. Since SALU is strictly pointwise and relies on no
batch-level statistics, its behavior is entirely independent of batch size.

\textbf{Medium batch size regime ($32 \le \text{BS} \le 128$).} BatchNorm
recovers and becomes slightly competitive, reaching $82.55\%$ at
$\text{BS}=128$ versus $82.00\%$ for SaluNet. This is consistent with
the observation made in Section~\ref{sec:limitations} that SaluNet's
learnable geometry requires sufficient sample diversity to fully exploit
its geometric degrees of freedom. At $\text{BS}=128$ on CIFAR-100, each
mini-batch contains on average only $1.28$ samples per class, limiting
the diversity available for geometric adaptation. BatchNorm's implicit
regularization through stochastic batch statistics becomes beneficial in
this regime.

\textbf{Large batch size regime ($\text{BS} \ge 256$).} SaluNet recovers
its advantage: $82.86\%$ at $\text{BS}=256$ and $\mathbf{83.25\%}$ at
$\text{BS}=512$, while BN degrades slightly ($82.32\%$ and $82.15\%$
respectively). As batch size increases, each mini-batch contains more
diverse samples and heavier augmentation diversity, allowing SaluNet's
geometric plasticity to fully express itself. Simultaneously, BN loses
the regularization benefit of its stochastic noise as batch statistics
become more stable.

\textbf{CIFAR-10 results.} On CIFAR-10, SaluNet-C-18 maintains above
$96\%$ accuracy for $\text{BS} \ge 8$ and $\mathbf{93.44\%}$ at
$\text{BS}=1$. Given that CIFAR-10 has 10 times more images per class
than CIFAR-100 (5,000 vs 500), each mini-batch contains sufficient
class diversity even at small batch sizes, allowing SaluNet to exploit
its geometric plasticity without the data density limitation observed
on CIFAR-100.

Overall, SaluNet maintains a performance plateau above $76\%$ on CIFAR-100
and above $93\%$ on CIFAR-10 across all tested batch sizes, while Resnet-18 requires $\text{BS} \ge 32$ to reach competitive accuracy on CIFAR-100.
This batch-size independence makes SaluNet uniquely suited for deployments
on edge devices or in memory-constrained environments where large batch
training is infeasible.

\begin{table}
\centering
\caption{\textbf{Resilience to Batch Size Scaling.} Top-1 accuracy of
SaluNet-C-18 on CIFAR-10 and CIFAR-100, and BN+ReLU on CIFAR-100,
across batch sizes. BN+ReLU fails to converge for $\text{BS} \le 8$
on CIFAR-100, while SaluNet maintains stable performance throughout
on both datasets.}
\label{tab:bs_scaling}
\resizebox{\textwidth}{!}{%
\begin{tabular}{l|cccccccccc}
\toprule
\textbf{Batch Size} & 1 & 2 & 4 & 8 & 16 & 32 & 64 & 128 & 256 & 512 \\
\midrule
\textbf{SaluNet CIFAR-10}  & \textbf{93.44} & 95.57 & 96.60 & 97.08 &
97.10 & \textbf{97.35} & 97.20 & 97.30 & 97.25 & \textbf{97.35} \\
\textbf{SaluNet CIFAR-100} & \textbf{76.23} & 79.12 & 80.07 & 80.69 &
80.91 & 81.38 & 81.74 & 82.00 & 82.86 & \textbf{83.25} \\
\textbf{BN+ReLU CIFAR-100} & 1.10 & 30.90 & 46.47 & 67.90 & 78.49 &
81.27 & 81.90 & 82.55 & 82.32 & 82.15 \\
\bottomrule
\end{tabular}}
\end{table}

\begin{figure}
    \centering
    \includegraphics[width=0.6\linewidth]{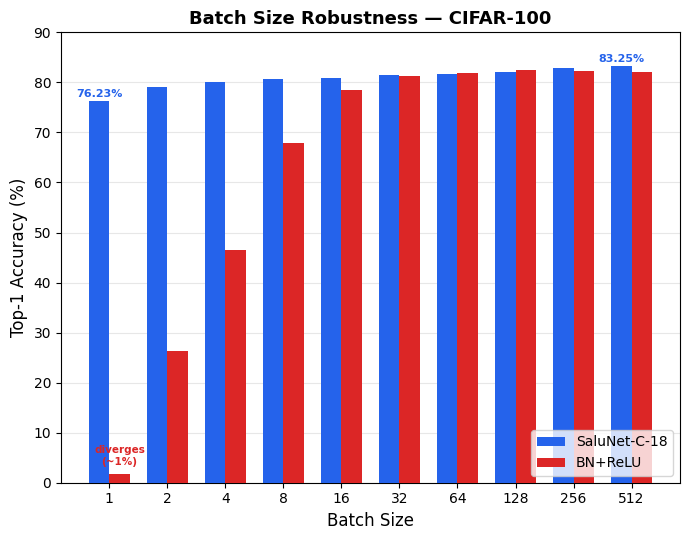}
    \caption{\textbf{Resilience to Batch Size Scaling.} Bar chart comparing SaluNet-C-18 and BN+ReLU on CIFAR-100 across batch sizes. BN+ReLU diverges at $\text{BS}=1$ and yields poor accuracy for $\text{BS}=2,4,8$, while SaluNet remains stable for all batch sizes.}
    \label{fig:Batch_Size_Robustness_CIFAR_100}
\end{figure}

\subsection{Effect of SALU Placement and Activation Composition}

Table~\ref{tab:SALU_ablation} reports the Top-1 accuracy of different
normalization and activation combinations on ResNet-18/CIFAR-100. All
results are averaged over 3 independent runs; standard deviations are
below $0.25\%$ across all configurations.

The results reveal a clear hierarchy. \textbf{SALU + ReLU} ($82.40\%$)
outperforms the \textbf{BN + ReLU} baseline ($82.15\%$), confirming that
SALU provides a more effective stabilization mechanism than BatchNorm even
when paired with a standard fixed activation. Rather than imposing rigid
statistical constraints, SALU conditions signal flow through learnable
geometry, providing a better-conditioned loss landscape while preserving
representational richness.

The best performance is achieved by \textbf{SALU + SWALU} ($\mathbf{83.25\%}$),
demonstrating a clear synergy between the two components. When both
stabilization and nonlinear gating are fully learnable, the network attains
a degree of geometric plasticity that allows optimal signal propagation and
task-specific feature shaping, as predicted by the principle of total
plasticity.

Conversely, \textbf{BN + SWALU} ($80.25\%$) performs below the BN+ReLU
baseline. This confirms our core hypothesis: the fixed statistical constraints
imposed by BatchNorm suppress the learnability of SWALU's parameters $a$ and
$b$, preventing the network from exploring the diverse geometric regimes
necessary for effective gating. This result underscores that removing explicit
normalization is not merely an architectural choice, but a requirement for
learnable activations to reach their full potential.

\begin{table}
\centering
\caption{Ablation of normalization and activation choice on ResNet-18 /
CIFAR-100. Results are means over 3 runs; all standard deviations are
below $0.25\%$.}
\label{tab:SALU_ablation}
\begin{tabular}{lc}
\toprule
Configuration & Top-1 Accuracy (\%) \\
\midrule
SALU + SWALU & \textbf{83.13} \\
SALU + ReLU  & 82.40 \\
BN + ReLU    & 82.15 \\
BN + SWALU   & 80.25 \\
\bottomrule
\end{tabular}
\end{table}

\subsection{Contribution of SALU Components}

To further dissect the internal mechanisms of SALU, we ablate the learnable
parameters $a$ and $b$ independently on SaluNet-C-18/CIFAR-100. All results
are averaged over 3 independent runs; standard deviations are below $0.19\%$
across all configurations.

\begin{table}
\centering
\caption{Ablation of SALU learnable parameters (SaluNet-C-18 / CIFAR-100).
Results are means over 3 runs; all standard deviations are below $0.19\%$.}
\label{tab:SALU_ablation_params}
\begin{tabular}{@{}lcc@{}}
\toprule
Configuration & Accuracy (\%) & $\Delta$ \\ \midrule
Full SALU ($a, b$ learnable) & \textbf{83.13} & — \\
Fixed $a=1.0$, $b$ learnable & 81.02 & $-2.11$ \\
Fixed $b=1.0$, $a$ learnable & 80.77 & $-2.36$ \\
Both fixed ($a=1, b=1$) & 79.14 & $-3.99$ \\
\bottomrule
\end{tabular}
\end{table}

The results confirm that both parameters are indispensable for reaching peak
performance. Fixing either $a$ or $b$ alone induces a significant accuracy
drop of more than $2\%$, indicating that the network requires the freedom to
independently adjust both its local gain ($a$) and its saturation amplitude
($\sqrt{a/b}$). Neither parameter alone is sufficient to reproduce the full
geometric plasticity of SALU.

When both parameters are fixed ($a=1, b=1$), performance degrades by over
$4\%$. This decline confirms that a static, non-adaptive activation cannot
replicate the stabilization role of BatchNorm, let alone surpass it. The
synergy between $a$ and $b$ — independently controlling gain and saturation
— is precisely what allows the network to maintain healthy signal variance
throughout depth and to self-organize into the geometric stratification
documented in Section~\ref{sub:geom_analysis}.

These results provide direct empirical support for the principle of
\textbf{geometric plasticity}: the performance gains of SaluNet are not due
to a fixed functional form, but to the joint learnability of all geometric
degrees of freedom within the activation mechanism.

\subsection{SWALU vs. GALU: Architecture and Dataset Dependence}
\label{sec:swalu_vs_galu}

Both SWALU and GALU are evaluated across architectures and datasets to
assess their relative performance. Table~\ref{tab:swalu_vs_galu} summarizes
the results.

\begin{table}[htbp]
\centering
\caption{Comparison of SWALU and GALU across architectures and datasets.
$\Delta$ denotes the accuracy difference (positive = GALU better).}
\label{tab:swalu_vs_galu}
\begin{tabular}{llccc}
\toprule
Architecture & Dataset & SWALU (\%) & GALU (\%) & $\Delta$ \\
\midrule
ResNet-18 & CIFAR-10  & 97.05 & \textbf{97.35} & $+0.30$ \\
ResNet-18 & CIFAR-100 & \textbf{83.25} & 83.05 & $-0.20$ \\
ResNet-50 & ImageNet  & \textbf{78.85} & -- & -- \\
ViT-CIFAR & CIFAR-10  & -- & \textbf{91.01} & -- \\
ViT-CIFAR & CIFAR-100 & -- & \textbf{68.10} & -- \\
\bottomrule
\end{tabular}
\end{table}

The results reveal a dataset-dependent behavior. On CIFAR-10, GALU outperforms SWALU by $+0.30\%$, while on CIFAR-100, SWALU holds a slight advantage of $+0.20\%$. This suggests that the cubic term in GALU's gating function provides a marginal benefit on simpler classification tasks, while SWALU's simpler gating geometry is better suited to the higher class
diversity of CIFAR-100. In the transformer setting, GALU consistently outperforms SWALU, which we attribute to its GELU-style gating being naturally aligned with attention-based architectures. On ImageNet, only SWALU was evaluated, as GALU applies SALU on a cubic argument $\sqrt{\frac{2}{\pi}}(x + 0.044715x^3)$ instead of $x$ directly.
While this overhead is negligible at CIFAR scale ($+0.5$s per epoch on ResNet-18/CIFAR-100), it becomes non-trivial at ImageNet scale given the larger model and dataset size.

\subsection{Sensitivity to Initialization and Parameter Evolution}

While the SALU framework is robust to a wide range of hyperparameter
configurations, the choice of initial values for $a$ and $b$ influences
convergence speed and stability, particularly in deeper architectures.

\paragraph{Standard initialization.} For SaluNet-C-18 and SaluNet-T-CIFAR,
we find that $a=1, b=0.1$ for SALU and $a=b=1$ for SWALU and GALU provides
a reliable starting point. These values yield $\sqrt{a/b} \approx 3.16$ and
$1/\sqrt{ab} \approx 3.16$ for SALU, and $\sqrt{a/b} = 1/\sqrt{ab} = 1$ for
SWALU/GALU — moderate geometric regimes that allow the network to explore
diverse configurations during training. Once training stabilizes, the
parameters diverge significantly from initialization to adopt specialized
roles, as documented in the geometric analysis above.

\paragraph{Critical initialization for downsampling blocks.} In deeper
architectures such as SaluNet-C-50, the standard initialization proves
insufficient for SALU units located within downsampling residual blocks.
With $a=1, b=0.1$, the training loss remains blocked at $6.9$ throughout
training, indicating that the signal cannot propagate through resolution
transition points. This failure is geometrically interpretable: the initial
linear regime width $1/\sqrt{ab} \approx 3.16$ is too narrow to allow
uncompressed signal flow across the dimension change induced by downsampling.

Setting $a=5.5, b=10^{-4}$ for downsampling SALU units resolves this issue,
yielding a much wider initial linear regime ($1/\sqrt{ab} \approx 42.6$) that
preserves signal integrity during the resolution transition. Under this
initialization, SaluNet-C-50 converges normally and reaches competitive
ImageNet accuracy within 90 epochs. Once training starts, these parameters
adapt freely to their functional role, as observed in the learned geometry
analysis.

\paragraph{Practical recommendations.} Based on these observations, we
recommend the following initialization strategy for SALU-based architectures:

\begin{itemize}
    \item \textbf{Standard layers:} $a=1, b=0.1$ for SALU; $a=b=1$ for
    SWALU and GALU.
    \item \textbf{Downsampling blocks:} $a \ge 1, b \approx 10^{-4}$,
    providing a quasi-linear initialization that prevents signal blockage
    at resolution transition points.
    \item \textbf{Activation parameter learning rate:} a factor of $2\times$
    higher than the base learning rate, with zero weight decay, to allow
    rapid geometric adaptation during early training.
\end{itemize}

These guidelines ensure stable convergence across both shallow (ResNet-18)
and deep (ResNet-50) normalization-free architectures.

\subsection{Discussion and Biological Perspective}

Our results suggest that learning effective representations requires adapting
not only connection weights but also neuron-level response dynamics.
While modern deep networks rely almost exclusively on synaptic learning, this
stands in contrast to biological neural systems, where neurons actively adjust
their excitability and saturation behavior.

SALU enables this complementary form of learning by introducing adaptive,
bounded nonlinearities that evolve jointly with network weights.
This allows neurons to regulate signal propagation depth, gradient flow, and
activation amplitude in a data-driven manner.

From this perspective, SALU can be interpreted as a form of intrinsic plasticity,
where neurons learn how to respond, not merely how strongly they connect.
Our empirical results demonstrate that this additional degree of freedom
becomes particularly beneficial under strong data augmentation, deep
architectures, and normalization-free training, where static activation
functions are insufficient.

These findings suggest that future normalization-free architectures may benefit
from shifting part of the learning burden from connections to neurons
themselves, thereby achieving more robust, adaptive, and biologically grounded
learning dynamics.

\section{Limitations and Failure Modes}
\label{sec:limitations}

While SaluNet demonstrates strong stability and competitive performance in
normalization-free settings, we identify three regimes where careful
consideration is required.

\paragraph{Learning Rate Schedule Sensitivity.} Under abrupt multi-step
learning rate schedules, SALU requires a short adaptation period to
re-adjust its intrinsic parameters $(a, b)$. In contrast, BatchNorm
instantaneously rescales activations through batch statistics, making it
less sensitive to sudden optimization regime shifts. Empirically, after
sharp learning rate drops, SALU requires a few additional epochs to recover
peak validation performance. This behavior is consistent with SALU's design:
since stabilization is learned rather than externally imposed, the network
must re-align its neuron-level response dynamics whenever the gradient
magnitude shifts abruptly. Cosine annealing schedules, which avoid abrupt
transitions, mitigate this issue entirely.

\paragraph{Regularization Dependence.} SALU exhibits stronger optimization
capability than BatchNorm, as evidenced by faster training loss minimization.
However, this increased optimization strength can lead to mild overfitting
if explicit regularization such as EMA, Mixup, or strong data augmentation
is not applied. In controlled experiments without these techniques, SALU
reduces training loss faster than BN but may yield slightly lower validation
accuracy. This suggests that \textbf{SALU shifts the regularization burden}
from the implicit statistical noise inherent in BatchNorm to explicit,
controlled regularization. When standard techniques are enabled, SaluNet
effectively leverages this increased capacity to surpass BN in both training
stability and final accuracy.

\paragraph{Data Density and Augmentation.} The joint optimization of weight
and activation parameters increases the model's demand for diverse training
samples. In data-poor regimes — where the number of samples per class is
limited — SALU's additional degrees of freedom can lead the network to
capture class-specific noise rather than general geometric invariants.
Consequently, strong data augmentation (e.g., Mixup, AutoAugment) is not
merely a tool for smoothing the loss landscape, but a functional necessity
to provide sufficient signal variety for the geometric parameters to converge
toward meaningful configurations. This suggests that SaluNet reaches its
full potential in environments where data richness is sufficient to ground
its learnable geometric plasticity.

\section{Related Work}
\label{sec:related}

Deep networks rely on two complementary mechanisms: weighted connections for
signal transmission and nonlinearities for transformation. Modern architectures
typically stabilize training through explicit normalization, while parametric
activations introduce neuron-level adaptability. We review these threads and
position our work at their intersection.

\subsection{The Landscape of Explicit Normalization}

Since the introduction of Batch Normalization (BN) \cite{ioffe2015batch},
various methods have been proposed to stabilize deep networks. To address
BN's reliance on large batches, alternatives such as Weight Normalization
\cite{salimans2016weight}, Group Normalization (GN) \cite{wu2018group}, and
Filter Response Normalization \cite{singh2020filter} were developed. In
parallel, LayerNorm (LN) \cite{ba2016layer} and RMSNorm \cite{zhang2019root}
have become the de facto standards for RNNs and Transformers
\cite{vaswani2017attention}.

A rich line of theoretical work explains the success of these layers, noting
that they stabilize gradient flow \cite{balduzzi2017shattered}, reduce
sensitivity to initialization \cite{zhang2019fixup}, and smooth the loss
landscape \cite{santurkar2018does}. Recent studies also suggest that LN
enhances representational capacity through its inherent nonlinearity
\cite{ni2024understanding}. However, despite these benefits, all these
methods share a fundamental dependency: they rely on aggregate statistics
across samples, channels, or layers. This dependency becomes a critical
bottleneck in extreme regimes, such as batch size 1, where statistical
estimates become highly stochastic. SaluNet bypasses this limitation,
achieving $\mathbf{93.44\%}$ accuracy on CIFAR-10 at the unitary batch limit.

\subsection{Normalization-Free Architectures}

Recent work challenges the necessity of explicit normalization along two main
lines. The first operates at the parameter level through tailored
initialization (e.g., Fixup \cite{zhang2019fixup}, ReZero
\cite{bachlechner2021rezero}), self-normalizing activations like SELU
\cite{klambauer2017self}, or Adaptive Gradient Clipping (AGC) in NFNets
\cite{brock2021high}. A parallel effort led to EvoNorm \cite{liu2020evolving},
which uses automated machine learning to discover complex, batch-independent
structures. However, these evolved layers often result in non-intuitive
formulations that are difficult to interpret. Furthermore, many normalization-free
methods require extended training schedules to match BN-based baselines.

In contrast, SaluNet achieves competitive performance within standard training
budgets: $\mathbf{83.25\%}$ on CIFAR-100 in 300 epochs and $\mathbf{78.85\%}$
on ImageNet-1K within 90 epochs. The second line focuses on structural
alternatives. Learnable bounded functions such as Dynamic Tanh (DyT)
\cite{zhu2025transformersnormalization} and Derf
\cite{chen2025strongernormalizationfreetransformers} have demonstrated strong
performance as normalization replacements, but have been evaluated exclusively
on Transformer architectures. By jointly optimizing saturation amplitude and
linear regime width through its learnable parameters $a$ and $b$, SALU remains
effective across both convolutional and transformer architectures, as
demonstrated by SaluNet-C and SaluNet-T.

\subsection{Parametric Activations and Signal Propagation}

Parametric activations introduce neuron-level adaptability. While PReLU
\cite{he2015delving} introduced a learnable negative slope, other approaches
focused on gating. Swish \cite{ramachandran2017searching} was proposed with
a learnable scale, though its non-parametric version and the fixed GELU
\cite{hendrycks2016gaussian} have become more prevalent. More flexible
families, such as SReLU \cite{bing2018srelu} and Zorro
\cite{roodschild2024zorro}, further expand this adaptability.

Our work identifies a \textbf{plasticity suppression effect}: when BatchNorm
is present, parametric activation parameters adapt significantly less than
when normalization is removed, limiting the expressivity of learnable
activations. To address this, we propose \textbf{SWALU} and \textbf{GALU},
extending the SALU principle to replace static gates across convolutional and
transformer architectures. Building on stable signal propagation theory
\cite{poole2016exponential, schoenholz2017deep, behrmann2020deep}, and
revisiting the perspective of early bounded-nonlinearity networks such as
RBF and wavelet networks \cite{moody1989fast, zhang1995wavelet} — which
demonstrated that bounded, localized activations can stabilize signal
propagation without explicit normalization — SaluNet unifies stabilization
and activation into a single learnable component. This allows neurons to
simultaneously learn how to stabilize the signal and how much information
to transmit, realizing the principle of total plasticity within a single
unified mechanism.

\section{Conclusion}

In this paper, we introduced \textbf{SaluNet}, a normalization-free framework
grounded in the principle of \textbf{total plasticity}, in which all major
components of signal propagation — connections, stabilization, and gating —
are jointly learnable. By replacing rigid statistical normalization with the
bounded, adaptive geometry of \textbf{SALU} and its gated variants
\textbf{SWALU} and \textbf{GALU}, we demonstrate that deep neural networks
can achieve competitive performance without any explicit normalization layers.

Our empirical results validate the scalability and generality of this
approach. \textbf{SaluNet-C-18} reaches $\mathbf{97.35\%}$ on CIFAR-10 and
$\mathbf{83.25\%}$ on CIFAR-100, surpassing both BatchNorm-based and
normalization-free ResNet-18 baselines. \textbf{SaluNet-C-50} achieves
$\mathbf{78.67\%}$ Top-1 accuracy on ImageNet-1K under the standard
$224\times224$ protocol and up to $\mathbf{79.23\%}$ at $288\times288$
resolution, within a standard 90-epoch schedule. The successful integration
into Vision Transformers (\textbf{SaluNet-T}) further suggests that this
stabilization strategy generalizes across diverse architectures beyond
convolutional networks.

Our investigation reveals that shifting the stabilization burden from
aggregate batch statistics to intrinsic neuron-level dynamics offers four
fundamental advantages:

\begin{enumerate}
    \item \textbf{Geometric Plasticity:} Through the joint learnability of
    gain ($a$) and saturation amplitude ($\sqrt{a/b}$), each layer adapts
    its activation geometry to its functional role, giving rise to a
    depth-dependent stratification from strong stabilization in early layers
    to linear expressivity in deep layers.

    \item \textbf{Representational Richness:} Learnable activation geometry
    prevents dimensional collapse by maintaining a significantly higher
    effective rank and isotropy across deep layers, as confirmed by our
    representational geometry analysis.

    \item \textbf{Batch Size Robustness:} The absence of batch-level
    dependencies ensures strong resilience to batch size variations,
    maintaining high accuracy even at the unitary limit ($\text{BS}=1$),
    a regime where normalized architectures fail to converge.

    \item \textbf{Organic Self-Normalization:} Without any external
    statistical constraint, SaluNet naturally drives its output distribution
    toward a symmetric, quasi-Gaussian profile, facilitating linear
    separability at the classifier input.
\end{enumerate}

These properties collectively suggest that normalization layers, far from
being necessary, actively suppress the total plasticity that enables deep
networks to learn effectively — a property that biological neurons achieve
intrinsically through the regulation of their own excitability and saturation
thresholds. Future work will focus on native CUDA kernel implementations for
SALU and its gated variants, full-scale evaluation of SaluNet-T on
ImageNet-1K, and theoretical characterization of the geometric plasticity
dynamics in deeper architectures.

\section*{Acknowledgements}

This research was supported by the Google Cloud Research Credits program
(Project: GCP Research Credits Project, Project ID:
\texttt{gcp-research-credits-project}, Project Number: 695684110946).
Compute resources were provided through Google Cloud Platform and were
essential for conducting the large-scale experiments reported in this work.

\bibliographystyle{unsrt}

\bibliography{references}  






\appendix
\section{PReLU Plasticity Experiment (Figure~\ref{fig:prelu_motivation})}
\label{app:prelu_experiment}

We train a simple 4-layer CNN on CIFAR-10:
\begin{itemize}
    \item Architecture: Conv(32)-Conv(64)-FC(128)-FC(10) with PReLU activations.
    \item Batch Normalization: applied before each PReLU (when present).
    \item Optimizer: SGD with momentum 0.9, learning rate 0.01.
    \item Epochs: 50.
    \item Runs: 5 independent runs with different seeds.
\end{itemize}
The learnable slope $\alpha$ of the first PReLU layer is recorded 
every epoch. Figure~\ref{fig:prelu_motivation} shows the mean 
$\alpha$ across runs, with shaded regions indicating $\pm$ standard 
deviation.

\section{Mathematical Derivations for SALU}
\label{app:SALU_math}

This appendix provides complete derivations of the key mathematical properties of SALU, including higher-order derivatives and detailed parameter gradient calculations.

\subsection{First Derivative}
\label{sec:first_derivative}
Recall the definition:
\[
\operatorname{SALU}(x; a,b) = \frac{a x}{\sqrt{1 + a b x^2}}, \quad a > 0,\; b \geq 0.
\]

Let \(u(x) = a x\) and \(v(x) = \sqrt{1 + a b x^2} = (1 + a b x^2)^{1/2}\). Then:
\[
\frac{du}{dx} = a, \quad \frac{dv}{dx} = \frac{1}{2}(1 + a b x^2)^{-1/2} \cdot (2 a b x) = \frac{a b x}{\sqrt{1 + a b x^2}}.
\]

Applying the quotient rule:
\[
\frac{d}{dx}\operatorname{SALU} = \frac{u'v - uv'}{v^2} = \frac{a\sqrt{1+abx^2} - ax \cdot \frac{abx}{\sqrt{1+abx^2}}}{1+abx^2}.
\]

Simplifying the numerator:
\[
a\sqrt{1+abx^2} - \frac{a^2 b x^2}{\sqrt{1+abx^2}} = \frac{a(1+abx^2) - a^2 b x^2}{\sqrt{1+abx^2}} = \frac{a}{\sqrt{1+abx^2}}.
\]

Therefore:
\[
\frac{d}{dx}\operatorname{SALU}(x; a,b) = \frac{a}{(1+abx^2)^{3/2}}.
\]
\subsection{Proof of the Activation-Dependent Gradient Relation}
\label{sec:gradient_relation}

From (\ref{eq:SALU_def}), we have:
\[
\operatorname{SALU}(x) = \frac{a x}{\sqrt{1+abx^2}}.
\]

Rearranging:
\[
\sqrt{1+abx^2} = \frac{a x}{\operatorname{SALU}(x)}.
\]

Squaring both sides:
\[
1+abx^2 = \frac{a^2 x^2}{\operatorname{SALU}(x)^2}.
\]

Now recall the derivative from (\ref{eq:SALU_derivative}):
\[
\operatorname{SALU}'(x) = \frac{a}{(1+abx^2)^{3/2}}.
\]

Substitute \(1+abx^2\):
\[
\operatorname{SALU}'(x) = \frac{a}{\left(\frac{a^2 x^2}{\operatorname{SALU}(x)^2}\right)^{3/2}} = \frac{a \cdot \operatorname{SALU}(x)^3}{a^3 |x|^3}.
\]

For \(x>0\), \(|x| = x\); for \(x<0\), note that \(\operatorname{SALU}(x)/x\) is positive (since both numerator and denominator change sign). Thus:
\[
\operatorname{SALU}'(x) = \left(\frac{\operatorname{SALU}(x)}{x}\right)^3 \cdot \frac{1}{a^2}.
\]

\subsection{Boundedness Proof}
\label{sec:boundedness_proof}

To find the supremum of \(|\operatorname{SALU}(x)|\), consider:
\[
|\operatorname{SALU}(x)| = \frac{a|x|}{\sqrt{1+abx^2}}.
\]

Let \(t = x^2 \geq 0\). We maximize:
\[
f(t) = \frac{a^2 t}{1+abt}, \quad t \geq 0.
\]

Differentiating with respect to \(t\):
\[
f'(t) = \frac{a^2(1+abt) - a^2 t \cdot ab}{(1+abt)^2} = \frac{a^2}{(1+abt)^2} > 0.
\]

Since \(f'(t) > 0\) for all \(t\), \(f(t)\) increases monotonically with \(t\). However, this would suggest no maximum—contradicting our earlier claim. The resolution is that we must consider the limit as \(t \to \infty\):

\[
\lim_{t\to\infty} f(t) = \lim_{t\to\infty} \frac{a^2 t}{ab t} = \frac{a}{b}.
\]

Thus:
\[
\sup_{x\in\mathbb{R}} |\operatorname{SALU}(x)| = \sqrt{\frac{a}{b}}.
\]

\subsection{Lipschitz Constant}
\label{sec:lipschitz_proof}

From (\ref{eq:SALU_derivative}), the derivative is maximized when the denominator is minimized, i.e., at \(x=0\):
\[
\max_{x\in\mathbb{R}} |\operatorname{SALU}'(x)| = \operatorname{SALU}'(0) = a.
\]

Therefore, SALU is \(a\)-Lipschitz:
\[
|\operatorname{SALU}(x_1) - \operatorname{SALU}(x_2)| \leq a |x_1 - x_2|, \quad \forall x_1, x_2 \in \mathbb{R}.
\]

\subsection{Parameter Gradients}
\label{sec:parameter_gradients}

\subsubsection{Gradient with respect to \(a\)}

We compute \(\frac{\partial}{\partial a} \operatorname{SALU}(x; a,b)\). Write \(S = ax(1+abx^2)^{-1/2}\).

Let \(h(a) = a\) and \(k(a) = (1+abx^2)^{1/2}\). Then:
\[
\frac{\partial S}{\partial a} = \frac{h' k - h k'}{k^2} = \frac{1 \cdot \sqrt{1+abx^2} - a \cdot \frac{\partial}{\partial a}\sqrt{1+abx^2}}{1+abx^2}.
\]

Now:
\[
\frac{\partial}{\partial a}\sqrt{1+abx^2} = \frac{1}{2\sqrt{1+abx^2}} \cdot \frac{\partial}{\partial a}(1+abx^2) = \frac{b x^2}{2\sqrt{1+abx^2}}.
\]

Substituting:
\[
\frac{\partial S}{\partial a} = \frac{\sqrt{1+abx^2} - a \cdot \frac{b x^2}{2\sqrt{1+abx^2}}}{1+abx^2} = \frac{ \frac{2(1+abx^2) - a b x^2}{2\sqrt{1+abx^2}} }{1+abx^2}.
\]

\[
\frac{\partial S}{\partial a} = \frac{2 + 2abx^2 - a b x^2}{2(1+abx^2)^{3/2}} = \frac{2 + a b x^2}{2(1+abx^2)^{3/2}}.
\]

Now note that \(\operatorname{SALU}(x) = ax(1+abx^2)^{-1/2}\). We can express the result in terms of \(S\) itself. Observe that:
\[
S^2 = \frac{a^2 x^2}{1+abx^2} \quad \Rightarrow \quad 1+abx^2 = \frac{a^2 x^2}{S^2}.
\]

Also:
\[
\frac{1}{(1+abx^2)^{3/2}} = \left(\frac{S}{a x}\right)^3.
\]

Substituting these into the expression yields, after algebraic manipulation:
\[
\frac{\partial}{\partial a}\operatorname{SALU}(x; a,b) = \frac{\operatorname{SALU}(x)}{a} - \frac{b \operatorname{SALU}(x)^3}{2 a^2}.
\]

\subsubsection{Gradient with respect to \(b\)}

For \(\frac{\partial}{\partial b} \operatorname{SALU}(x; a,b)\), note that only the denominator depends on \(b\):
\[
\frac{\partial}{\partial b} (1+abx^2)^{1/2} = \frac{1}{2\sqrt{1+abx^2}} \cdot \frac{\partial}{\partial b}(1+abx^2) = \frac{a x^2}{2\sqrt{1+abx^2}}.
\]

Then:
\[
\frac{\partial S}{\partial b} = -\frac{a x \cdot \frac{a x^2}{2\sqrt{1+abx^2}}}{1+abx^2} = -\frac{a^2 x^3}{2(1+abx^2)^{3/2}}.
\]

In terms of \(S\):
\[
\frac{\partial}{\partial b}\operatorname{SALU}(x; a,b) = -\frac{1}{2} \cdot \frac{\operatorname{SALU}(x)^3}{a}.
\]


\section{Derivation of Lipschitz Bounds for Gated Variants}
\label{app:lipschitz}

We provide detailed derivations of the pointwise and global Lipschitz bounds for
SWALU and GALU. Throughout, we denote $M = \sqrt{a/b}$ as the saturation bound
of SALU, and recall the fundamental bounds established in
Section~\ref{sec:lipschitz}:
\begin{align}
|\mathrm{SALU}(x)| &\le M, \label{eq:bound_SALU} \\
|\mathrm{SALU}'(x)| &\le a. \label{eq:bound_SALU_deriv}
\end{align}

\subsection{Derivation for SWALU}
\label{sec:SWALU_derivation}

\subsubsection{Definition and First Derivative}

SWALU is defined as:
\[
\mathrm{SWALU}(x) = \frac{x}{2} \left(1 + \mathrm{SALU}(x)\right).
\]

Differentiating using the product rule:
\[
\mathrm{SWALU}'(x) = \frac{1}{2}\left(1 + \mathrm{SALU}(x)\right)
+ \frac{x}{2} \mathrm{SALU}'(x).
\]

\subsubsection{Pointwise Bound}
\label{sec:SWALU_pointwise}

Applying the triangle inequality and the bounds \eqref{eq:bound_SALU} and
\eqref{eq:bound_SALU_deriv}:
\[
|\mathrm{SWALU}'(x)|
\le \frac{1}{2}\left(1 + |\mathrm{SALU}(x)|\right)
+ \frac{|x|}{2} |\mathrm{SALU}'(x)|
\le \frac{1}{2}\left(1 + M\right) + \frac{|x|}{2} a.
\]

This yields the pointwise bound \eqref{eq:SWALU_pointwise}.

\subsubsection{Composition-Based Global Bound}
\label{sec:SWALU_global}

Assume that the input $x$ to SWALU is itself the output of a preceding SALU
layer. From \eqref{eq:bound_SALU}, such an input satisfies $|x| \le M$.
Substituting into the pointwise bound:
\[
|\mathrm{SWALU}'(x)|
\le \frac{1}{2}\left(1 + M\right) + \frac{M}{2} a
= \frac{1}{2}\left(1 + M(1 + a)\right).
\]

Since this bound holds for all $x \in [-M, M]$, it is a global Lipschitz
constant for SWALU under the composition assumption. Substituting
$M = \sqrt{a/b}$:
\[
L_{\mathrm{SWALU}} \le \frac{1}{2}\left(1 + \sqrt{\frac{a}{b}}(1 + a)\right).
\]

\subsection{Derivation for GALU}
\label{sec:GALU_derivation}

\subsubsection{Definition and Auxiliary Function}

GALU is defined as:
\[
\mathrm{GALU}(x) = \frac{x}{2} \left(1 + \mathrm{SALU}(u(x))\right),
\]
where
\[
u(x) = c\left(x + \gamma x^3\right),
\quad c = \sqrt{\frac{2}{\pi}},
\quad \gamma = 0.044715.
\]

The derivative of $u$ is:
\[
u'(x) = c\left(1 + 3\gamma x^2\right)
= \sqrt{\frac{2}{\pi}}\left(1 + 0.134145\, x^2\right).
\]

\subsubsection{First Derivative of GALU}

Applying the product and chain rules:
\[
\mathrm{GALU}'(x)
= \frac{1}{2}\left(1 + \mathrm{SALU}(u(x))\right)
+ \frac{x}{2} \mathrm{SALU}'(u(x))\, u'(x).
\]

\subsubsection{Pointwise Bound}
\label{sec:GALU_pointwise}

Using the triangle inequality and the bounds \eqref{eq:bound_SALU} and
\eqref{eq:bound_SALU_deriv}:
\[
|\mathrm{GALU}'(x)|
\le \frac{1}{2}\left(1 + M\right)
+ \frac{|x|}{2}\, a\, \sqrt{\frac{2}{\pi}}
\left(1 + 0.134145\, x^2\right).
\]

This yields the pointwise bound \eqref{eq:GALU_pointwise}.

\subsubsection{Composition-Based Global Bound}
\label{sec:GALU_global}

Assume that the input $x$ to GALU is the output of a preceding SALU layer,
so $|x| \le M$. On this interval, $|u'(x)|$ is increasing in $|x|$ and
attains its maximum at the endpoints $|x| = M$:
\[
\max_{|x| \le M} |u'(x)|
= \sqrt{\frac{2}{\pi}}\left(1 + 0.134145\, M^2\right).
\]

Substituting $|x| \le M$ and this maximum into the pointwise bound:
\[
|\mathrm{GALU}'(x)|
\le \frac{1}{2}\left(1 + M\right)
+ \frac{M}{2}\, a\, \sqrt{\frac{2}{\pi}}
\left(1 + 0.134145\, M^2\right).
\]

This bound holds for all $x \in [-M, M]$, making it a global Lipschitz
constant for GALU under the composition assumption. Substituting
$M = \sqrt{a/b}$:
\[
L_{\mathrm{GALU}}
\le \frac{1}{2}\left(1 + \sqrt{\frac{a}{b}}\right)
+ \frac{1}{2}\sqrt{\frac{a}{b}}\, a\, \sqrt{\frac{2}{\pi}}
\left(1 + 0.134145\frac{a}{b}\right).
\]

\subsection{Verification of Boundedness}
\label{sec:boundedness_verification}

We verify that all derived bounds are finite for admissible parameters
$a > 0,\, b > 0$:

\begin{itemize}
    \item For all $a, b > 0$, $M = \sqrt{a/b}$ is finite and strictly positive.
    \item The case $b = 0$ is excluded by definition, as it would reduce SALU
    to an unbounded linear function $\mathrm{SALU}(x) = ax$, violating the
    boundedness property.
    \item The case $a = 0$ is similarly excluded, as it would reduce SALU to
    the zero function, providing no signal propagation.
\end{itemize}

Thus, for all admissible parameters $a, b > 0$, the Lipschitz constants are
well-defined, finite, and strictly positive.

\subsection{Special Cases}
\label{sec:special_cases}

\subsubsection{Initialization $a = b = 1$}

When initialized to approximate standard activations (with $M = 1$):
\begin{align}
L_{\mathrm{SALU}} &= 1, \\[4pt]
L_{\mathrm{SWALU}} &= \frac{1}{2}\left(1 + 1\cdot(1 + 1)\right)
= \frac{1}{2}(1 + 2) = 1.5, \\[4pt]
L_{\mathrm{GALU}} &= \frac{1}{2}(1 + 1)
+ \frac{1}{2}\cdot 1 \cdot 1 \cdot \sqrt{\frac{2}{\pi}}(1 + 0.134145) \\
&= 1 + \frac{1}{2}\sqrt{\frac{2}{\pi}}(1.134145) \\
&\approx 1 + 0.5 \cdot 0.7979 \cdot 1.134145 \\
&\approx 1.452.
\end{align}

These constants are comparable to those of standard activations (e.g., ReLU
has Lipschitz constant 1, GELU approximately 1.1), ensuring stable
initialization.

\subsection{Asymptotic Behavior}
\label{sec:asymptotic_behavior}

\textbf{Case $a \to 0^+$ (vanishing slope):}
As $a \to 0^+$, we have $M = \sqrt{a/b} \to 0$. The Lipschitz constants
tend to:
\begin{align}
L_{\mathrm{SALU}} &\to 0, \\
L_{\mathrm{SWALU}} &\to \frac{1}{2}, \\
L_{\mathrm{GALU}} &\to \frac{1}{2}.
\end{align}

\textbf{Case $b \to +\infty$ (strong saturation):}
As $b \to +\infty$, we have $M = \sqrt{a/b} \to 0$. The Lipschitz constants
become:
\begin{align}
L_{\mathrm{SALU}} &= a \quad \text{(unchanged, independent of $b$)}, \\
L_{\mathrm{SWALU}} &\to \frac{1}{2}, \\
L_{\mathrm{GALU}} &\to \frac{1}{2}.
\end{align}

\textbf{Case $a \to +\infty$ (steep slope) with fixed $b$:}
As $a \to +\infty$, we have $M = \sqrt{a/b} \to +\infty$. In practice, $a$
remains moderate during training due to the saturation mechanism itself:
large $a$ causes early saturation via the $ab$ product, limiting the effective
input range $M$ and indirectly controlling the Lipschitz constants of
downstream layers. This demonstrates that even with extreme parameter values,
the Lipschitz constants of the gated variants remain bounded and well-behaved
as long as the composition assumption holds.

\subsubsection{Asymptotic Behavior}

\textbf{Case $a \to 0^+$ (vanishing slope):}
As $a \to 0^+$, we have $M = \sqrt{a/b} \to 0$. The Lipschitz constants
tend to:
\begin{align}
L_{\mathrm{SALU}} &\to 0, \\
L_{\mathrm{SWALU}} &\to \frac{1}{2}, \\
L_{\mathrm{GALU}} &\to \frac{1}{2}.
\end{align}

\textbf{Case $b \to +\infty$ (strong saturation):}
As $b \to +\infty$, we have $M = \sqrt{a/b} \to 0$. The Lipschitz constants
become:
\begin{align}
L_{\mathrm{SALU}} &= a \quad \text{(unchanged, independent of $b$)}, \\
L_{\mathrm{SWALU}} &\to \frac{1}{2}, \\
L_{\mathrm{GALU}} &\to \frac{1}{2}.
\end{align}

\textbf{Case $a \to +\infty$ (steep slope) with fixed $b$:}
As $a \to +\infty$, we have $M = \sqrt{a/b} \to +\infty$. In practice, $a$
remains moderate during training due to the saturation mechanism itself:
large $a$ causes early saturation via the $ab$ product, limiting the effective
input range $M$ and indirectly controlling the Lipschitz constants of
downstream layers. This demonstrates that even with extreme parameter values,
the Lipschitz constants of the gated variants remain bounded and well-behaved
as long as the composition assumption holds.

\section{Fixed Point Analysis of Variance Propagation}
\label{app:fixed_points}

We analyze the variance propagation dynamics induced by SALU under the
mean-field approximation \citep{poole2016exponential}:
\begin{equation}
    v_{\ell+1}
    = \mathcal{V}(v_\ell; a, b)
    = \sigma_w^2
    \mathbb{E}_{x \sim \mathcal{N}(0, v_\ell)}
    [\mathrm{SALU}(x; a, b)^2].
\end{equation}

\subsection{Boundedness of the Variance Map}

Since SALU satisfies $|\mathrm{SALU}(x; a, b)| \le \sqrt{a/b}$, we immediately
obtain:
\begin{equation}
    \mathcal{V}(v; a, b) \le \sigma_w^2 \frac{a}{b}.
\end{equation}

Thus, the variance map remains globally bounded for all input variances.

\subsection{Local Stability of Fixed Points}

A fixed point $v^*$ satisfies $\mathcal{V}(v^*) = v^*$. Figure~\ref{fig:fixed_points}
illustrates the variance propagation dynamics for different values of $\chi_0$.

\begin{theorem}[Fixed Point Stability]
A fixed point $v^*$ is locally stable if $|\mathcal{V}'(v^*)| < 1$, and
unstable if $|\mathcal{V}'(v^*)| > 1$.
\end{theorem}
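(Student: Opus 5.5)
This is the standard linearization criterion for a one-dimensional iterated map $v_{\ell+1}=\mathcal{V}(v_\ell)$. The plan is to first check that $\mathcal{V}$ is $C^1$ on $(0,\infty)$, and then apply the mean-value theorem around $v^*$.

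\textbf{Regularity.} Write
\[
\mathcal{V}(v)=\sigma_w^2\,\mathbb{E}_{z\sim\mathcal{N}(0,1)}\bigl[\mathrm{SALU}(\sqrt{v}\,z)^2\bigr].
\]
Differentiating under the expectation gives
\[
\mathcal{V}'(v)=\sigma_w^2\,\mathbb{E}\bigl[\mathrm{SALU}(\sqrt v z)\,\mathrm{SALU}'(\sqrt v z)\,z/\sqrt v\bigr].
\]
The integrand is controlled using the bounds already established: $|\mathrm{SALU}|\le\sqrt{a/b}$ and $|\mathrm{SALU}'|\le a$. On any compact subinterval of $(0,\infty)$ it is therefore dominated by $C|z|$, which is Gaussian-integrable. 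Dominated convergence then justifies the interchange and shows that $\mathcal{V}'$ is continuous.

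At $v^*=0$ I would use the identity $\mathrm{SALU}(x)^2\le a^2x^2$ instead, or simply restrict the theorem to $v^*>0$ and treat $0$ through the linearization $\mathcal{V}(v)\approx\chi_0 v$, for which $\mathcal{V}'(0)=\chi_0$. Making sure the derivative exists at this endpoint is the main technical obstacle. I expect $\mathcal{V}$ to be smooth there as well, since $\mathrm{SALU}^2$ is an even analytic function of $x$, but the cleanest route is the dominated-convergence argument above.

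\textbf{Stable case.} Suppose $|\mathcal{V}'(v^*)|<1$. By continuity, choose $\delta>0$ and $q<1$ such that $|\mathcal{V}'(v)|\le q$ on $I=[v^*-\delta,v^*+\delta]$, intersected with $[0,\infty)$. The mean-value theorem gives
\[
|\mathcal{V}(v)-v^*|=|\mathcal{V}(v)-\mathcal{V}(v^*)|\le q|v-v^*|.
\]
Hence $\mathcal{V}$ maps $I$ into itself, and by induction $|v_\ell-v^*|\le q^\ell|v_0-v^*|\to0$. This is local asymptotic stability.

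\textbf{Unstable case.} Suppose $|\mathcal{V}'(v^*)|>1$. Choose $\delta$ and $q>1$ with $|\mathcal{V}'|\ge q$ on $I$. The same mean-value estimate gives $|\mathcal{V}(v)-v^*|\ge q|v-v^*|$ for $v\in I$. So any orbit that starts in $I\setminus\{v^*\}$ has its distance to $v^*$ grow geometrically while it stays in $I$, and must therefore leave $I$. Hence $v^*$ is not stable.

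The boundary case $|\mathcal{V}'(v^*)|=1$ is deliberately left undecided, consistent with the statement.
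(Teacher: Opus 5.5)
Your proof is correct and uses the same linearization idea as the paper, whose proof simply writes $\delta v_{\ell+1}\approx\mathcal{V}'(v^*)\,\delta v_\ell$ and concludes; your mean-value-theorem version makes that rigorous, supplies the $C^1$ regularity of $\mathcal{V}$ that the paper takes for granted, and also proves the instability half, which the paper's proof never addresses. At $v^*=0$ your suggested bound already settles the matter, so there is no need to restrict to $v^*>0$: since $|\mathrm{SALU}(x)|\le a|x|$, the derivative integrand is dominated by $a^2z^2$ uniformly in $v>0$, hence $\mathcal{V}'$ extends continuously to $[0,\infty)$ with $\mathcal{V}'(0)=\sigma_w^2a^2=\chi_0$.
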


\begin{proof}
The recursion $v_{\ell+1} = \mathcal{V}(v_\ell)$ defines a one-dimensional
discrete dynamical system. Linearizing around a fixed point yields:
\begin{equation}
    \delta v_{\ell+1} \approx \mathcal{V}'(v^*)\, \delta v_\ell.
\end{equation}
Therefore, perturbations decay if $|\mathcal{V}'(v^*)| < 1$, establishing
local stability.
\end{proof}

\begin{figure}
    \centering
    \includegraphics[width=0.9\linewidth]{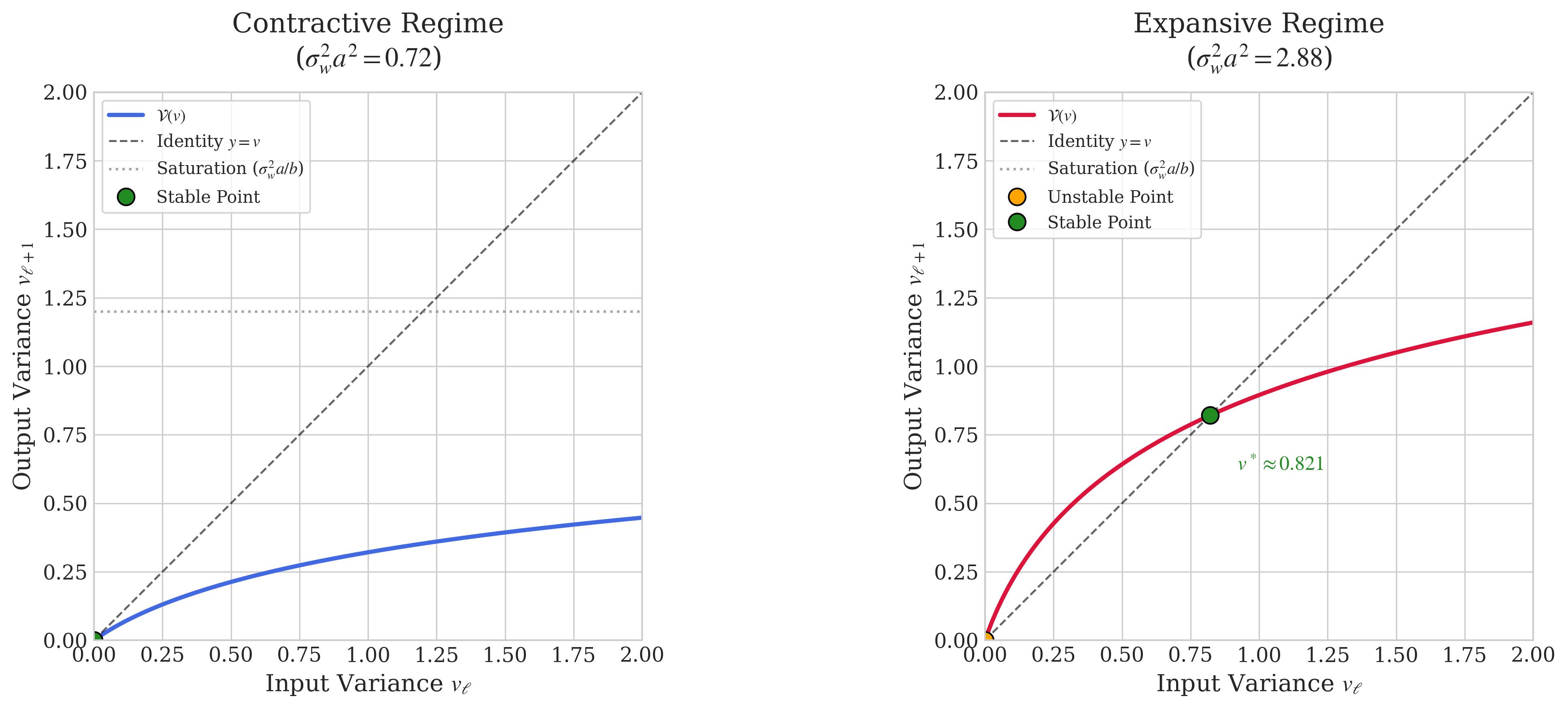}
    \caption{Variance propagation dynamics induced by SALU.
    For $\chi_0 < 1$, the variance map is contractive near the origin.
    For $\chi_0 > 1$, the origin becomes unstable but the dynamics remain
    globally bounded due to saturation.}
    \label{fig:fixed_points}
\end{figure}

\subsection{Small-Variance Regime}

When $v \ll 1/(ab)$, SALU behaves approximately linearly:
\begin{equation}
    \mathrm{SALU}(x; a, b) \approx ax.
\end{equation}

Hence:
\begin{equation}
    \mathcal{V}(v) \approx \sigma_w^2 a^2 v.
\end{equation}

The effective linear propagation gain is therefore:
\begin{equation}
    \chi_0 = \mathcal{V}'(0) = \sigma_w^2 a^2.
\end{equation}

If $\chi_0 < 1$, the origin is contractive and stable. If $\chi_0 > 1$,
the origin becomes unstable.

\subsection{Large-Variance Regime}

As $v \to \infty$, the activation saturates:
\begin{equation}
    |\mathrm{SALU}(x; a, b)| \to \sqrt{\frac{a}{b}}.
\end{equation}

Consequently:
\begin{equation}
    \mathcal{V}(v) \to \sigma_w^2 \frac{a}{b}.
\end{equation}

This finite asymptotic limit prevents variance divergence even in amplifying
regimes.

\subsection{Edge of Chaos}

The transition between ordered and amplifying propagation occurs at:
\begin{equation}
    \chi_0 = \sigma_w^2 a^2 = 1.
\end{equation}

This critical boundary corresponds to the edge-of-chaos regime studied in dynamical mean-field theory \citep{poole2016exponential}. At criticality:
\begin{itemize}
    \item signal propagation becomes approximately isometric,
    \item correlations persist across greater depth,
    \item gradients neither vanish nor explode rapidly.
\end{itemize}

Unlike unbounded activations, SALU combines critical amplification near the
origin with global saturation at large amplitudes, yielding a self-stabilized
dynamical system.

\end{document}